\newcommand{\PreserveBackslash}[1]{\let\temp=\\#1\let\\=\temp}
\newcolumntype{C}[1]{>{\PreserveBackslash\centering}p{#1}}
\newcolumntype{R}[1]{>{\PreserveBackslash\raggedleft}p{#1}}
\newcolumntype{L}[1]{>{\PreserveBackslash\raggedright}p{#1}}
\newcommand{\Identity}{{\rm I\kern-.2em l}}
\newtheorem{assumption}{Assumption}
\newtheorem{theorem}{Theorem}
\newtheorem{corollary}{Corollary}
\newtheorem{lemma}{Lemma}
\newtheorem{example}{Example}
\theoremstyle{plain}
\theoremstyle{remark}
\newtheorem{remark}[theorem]{Remark}
\newcommand{\bef}{\begin{figure}}
\newcommand{\eef}{\end{figure}}
\newcommand{\beq}{\begin{eqnarray}}
\newcommand{\eeq}{\end{eqnarray}}
\definecolor{bgcolor}{rgb}{0.85,0.85,1}
\definecolor{mydarkgreen}{RGB}{39,130,67}
\definecolor{mydarkred}{RGB}{192,25,25}
\newcommand{\green}{\color{mydarkgreen}}
\newcommand{\red}{\color{mydarkred}}
\newcommand{\cmark}{\green\ding{51}}%
\newcommand{\xmark}{\red\ding{55}}%
\colorlet{blue}{cyan!60}
\newcommand{\algcolor}[2]{%
  \hskip-\ALG@thistlm\colorbox{#1}{\parbox{\dimexpr\linewidth-2\fboxsep}{\hskip\ALG@thistlm\relax #2}}%
}
\begin{document}

 \title{\texttt{VORTEX:}  Aligning Task Utility and Human Preferences through LLM-Guided Reward Shaping}

\author{
Guojun Xiong$^{*}$,  \quad Milind Tambe \\
Department of Computer Science, Harvard University\\
      Cambridge, MA, USA\\ \{gjxiong,tambe\}@g.harvard.edu 
\date{}
}


\maketitle
\makeatletter
\def\blfootnote{\xdef\@thefnmark{}\@footnotetext}
\makeatother

\blfootnote{$^*$Correspondence to Guojun Xiong <gjxiong@g.harvard.edu>.}

\begin{abstract}

In social impact optimization, AI decision systems often rely on solvers that optimize well-calibrated mathematical objectives. However, these solvers cannot directly accommodate evolving human preferences, typically expressed in natural language rather than formal constraints. Recent approaches address this by using large language models (LLMs) to generate new reward functions from preference descriptions. While flexible,  they risk sacrificing the system's core utility guarantees. In this paper, we propose \texttt{VORTEX}, a language-guided reward shaping framework that preserves established optimization goals while adaptively incorporating human feedback. By formalizing the problem as multi-objective optimization, we use LLMs to iteratively generate shaping rewards based on verbal reinforcement and text-gradient prompt updates. This allows stakeholders to steer decision behavior via natural language without modifying solvers or specifying trade-off weights. We provide theoretical guarantees that \texttt{VORTEX} converges to Pareto-optimal trade-offs between utility and preference satisfaction. Empirical results in real-world allocation tasks demonstrate that \texttt{VORTEX} outperforms baselines in satisfying human-aligned coverage goals while maintaining high task performance. This work introduces a practical and theoretically grounded paradigm for human-AI collaborative optimization guided by natural language.
\end{abstract}
 

\section{Introduction}

Organizations across domains deploy AI systems to optimize resource allocation with mathematical objectives carefully designed through extensive stakeholder consultation and domain expertise~\citep{shi2020artificial}. For instance, public health programs allocate outreach calls to maximize patient engagement~\citep{mate2022field,xiong2025finite}, conservation organizations distribute protection resources to preserve biodiversity~\citep{dilkina2017trade}, and emergency response systems route aid to minimize harm~\citep{fiedrich2000optimized}. These objectives often represent years of institutional learning and proven operational success.

However, real-world conditions evolve rapidly. Public health crises shift demographic priorities~\citep{bambra2020covid}, environmental changes alter conservation needs~\citep{pressey2007conservation}, or community feedback reveals service gaps~\citep{abebe2020roles}. In response, program managers frequently need to adjust their allocation strategies—perhaps to "prioritize elderly patients more during flu season" or "increase coverage for underserved rural communities." Such preference adjustments reflect operational intuition and are naturally expressed in informal language rather than mathematical formulations, especially by decision-makers who lack technical expertise to modify complex optimization functions directly~\citep{chouldechova2018frontiers}. See Figure \ref{fig:motivation} for illustration.

This creates a fundamental tension. The existing solvers cannot directly handle the preferences and organizations are not supposed to simply abandon their carefully calibrated objectives, which embody substantial institutional knowledge and demonstrated performance. Yet ignoring evolving stakeholder preferences may lead to solutions that are mathematically optimal but operationally inadequate~\citep{obermeyer2019dissecting}. To bridge this tension, recent works~\citep{behari2024decision, verma2024balancing} generate entirely new reward functions from natural language descriptions with the aid of large language models (LLMs). While it enables flexible preference expression and maintains compatibility with existing solvers, it provides no guarantees about preserving performance on the original optimization criteria that organizations invested significant resources to develop.

\textit{This prompts us a question: how can we achieve mathematical optimality while respecting stakeholders' preferences?}

\begin{figure*}
    \centering
\includegraphics[width=0.99\linewidth]{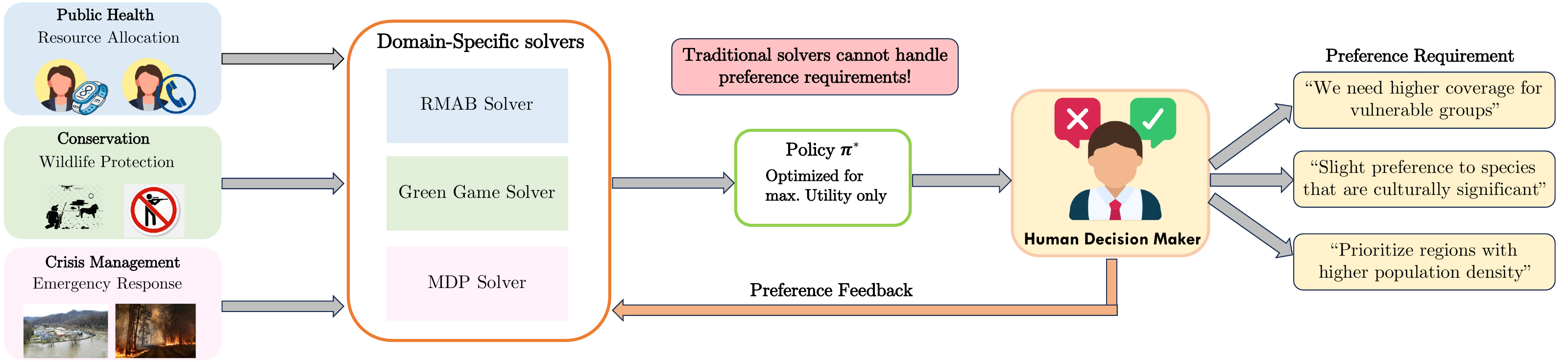}
    \caption{The fundamental tension between mathematical objective and human preferences.}
    \label{fig:motivation}
    \vspace{-0.3cm}
\end{figure*}
\vspace{-0.0cm}

To address this, we formulate a multi-objective optimization problem that preserves the established mathematical objective while incorporating human preferences as a second, alignment-oriented objective. {A common technique is 
weighted scalarization \cite{miettinen2012nonlinear, roijers2013survey}, which converts the problem into a single objective, and remains compatible with unmodified
task-specific solvers. However, there remain two fundamental challenges. First, scalarization aims in navigating the Pareto frontier, which require users to specify explicit scalarization weights or engage in an iterative process of tuning these weights to explore the trade-off space, a significant burden for decision-makers who express preferences qualitatively. Second, these human preferences are often imprecise and expressed in natural language, making them difficult to encode as a formal mathematical objective~\citep{christiano2017deep}.}

{To overcome these intertwined challenges, we propose \texttt{VORTEX} (Verbal-guided Optimization with Reward Tuning via Experiential Trajectory eXploration). To tackle the second challenge-encoding imprecise natural language preferences -\texttt{VORTEX} employs an LLM to generate auxiliary shaping rewards directly from this qualitative feedback. Crucially, to overcome the first challenge-navigating the Pareto frontier without the burden of manual weight tuning-\texttt{VORTEX} introduces an iterative framework that refines these shaping rewards through verbal reinforcement and text-gradient updates\cite{yu2024fincon}. Rather than replacing carefully engineered objectives, our approach augments them with adaptive, preference-aligned signals, allowing stakeholders to steer decision behavior while using existing solvers unmodified. We further provide theoretical guarantees that this process converges to Pareto-optimal solutions, achieving a principled trade-off between task utility and preference satisfaction.
}

\paragraph{Our Contributions.}
We summarize our main contributions as follows.

$\triangleright$ 
\textit{New Problem Formulation}: We cast the challenge of aligning algorithmic decision-making with human preferences as a multi-objective optimization problem, preserving core task utility while introducing a separate preference satisfaction objective.

$\triangleright$\textit{Solver-Compatible Reward Shaping}: 
  We propose a novel framework that leverages LLMs to encode human preference objectives via reward shaping, enabling compatibility with unmodified, reward-driven solvers.

$\triangleright$ \emph{Interactive Preference Refinement:} We propose an iterative optimization procedure, \texttt{VORTEX}, that refines LLM prompts using verbal feedback and text-gradient updates to navigate the Pareto frontier between task performance and preference alignment without specifying tradeoff weights.

$\triangleright$ \emph{Theoretical and Empirical Validation:} We prove that \texttt{VORTEX} converges to Pareto-optimal solutions under mild assumptions, and we demonstrate its effectiveness on real-world resource allocation tasks, achieving improved preference satisfaction while maintaining high task utility.

\section{Related Work}

\paragraph{Reward Shaping and Preference Constraints.}  
Reward shaping is a long-standing technique in reinforcement learning (RL)~\cite{ng1999policy} to accelerate learning or incorporate auxiliary objectives. Recent works study shaping for fairness~\cite{jabbari2017fairness}, diversity~\cite{celis2019classification}, and coverage~\cite{venkataraman2021conservative}. Our contribution lies in introducing an LLM-based mechanism to generate adaptive shaping signals from free-form natural language feedback, enabling soft and imprecise constraints to be expressed without formal modeling.

\paragraph{LLMs for Decision Optimization.}  
Recent work explores how LLMs can assist in or automate decision-making tasks.
For example, \cite{yu2024fincon,xiong2025flag} finetuned LLMs as proxy policy networks for conducting complex sequential decision-making tasks.  Eureka \cite{ma2023eureka} and Auto MC-Reward \cite{li2024auto}  has shown that LLMs can effectively generate 
reward functions from language specifications. Language to Rewards \cite{yu2023language}  trains LLMs to translate language instructions into reward functions for robotic tasks. \cite{kwon2023reward} investigates the simplification of reward design via language interfaces, leveraging LLMs as proxy reward functions.
The DLM framework \cite{behari2024decision,verma2024balancing} 
uses LLMs to generate reward functions for resource allocation 
in public health settings.
However, such approaches risk overriding core domain utility, as they optimize solely for human-specified preferences. Our framework complements this by integrating LLM-generated shaping rewards into an existing solver’s base reward, preserving task utility while iteratively improving preference alignment.
{Table \ref{table:compare} compares DLM with this paper from multiple dimensions. }

\paragraph{Multi-objective and Constrained RL.}  
{Multi-objective RL~\cite{roijers2013survey} and constrained MDPs~\cite{altman1999constrained} offer formal treatments of utility trade-offs.
A common technique is 
weighted scalarization \cite{miettinen2012nonlinear, hayes2022practical}, which converts the multi-objective problem into a single objective. While foundational, a persistent challenge with scalarization lies in navigating the Pareto frontier, often requiring explicit scalarization weights or engaging in an iterative process of tuning these weights to explore the trade-off space. In contrast, \texttt{VORTEX} introduces a novel language-guided paradigm to navigate the Pareto frontier. It bypasses manual weight tuning by using an LLM to translate verbal feedback into adjustments to the shaping reward, making the process more intuitive and accessible.}


\paragraph{Human-in-the-Loop Optimization.}  
There is a rich literature on incorporating human preferences into algorithmic decision-making. Traditional methods include inverse RL~\cite{ng2000algorithms, abbeel2004apprenticeship} and preference-based RL~\cite{christiano2017deep}, where human feedback is used to infer or adapt the reward function. However, these methods typically require structured queries or labeled comparisons and are limited by scalability in real-world deployments. Our work instead leverages LLMs as flexible oracles for encoding natural language preferences into reward shaping, enabling rapid and adaptive preference integration without retraining policies.


\begin{table}[h]
	\centering
    
	\caption{We highlight the difference between our proposed method \texttt{VORTEX} and prior approaches. \texttt{VORTEX} uniquely supports multi-objective optimization, provides theoretical guarantees, and allows trade-off control—while prior methods either ignore base objectives, lack formal analysis, or offer limited preference alignment.}\label{table:compare}
    \resizebox{0.9\textwidth}{!}{
	\begin{threeparttable}
		\footnotesize\setlength\tabcolsep{1.pt} 
		\begin{tabular}{ c  c  c  c  c   }
			\toprule[.1em]
			 \begin{tabular}{c}\bf Method \end{tabular} &  \begin{tabular}{c}\bf Multiple  \\  \bf Objective?  \end{tabular} &\begin{tabular}{c}\bf Entire Reward \\  \bf Generation?   \end{tabular} & \begin{tabular}{c} \bf  Theoretical\\ \bf Guarantee?   \end{tabular}    & \begin{tabular}{c}\bf Tradeoff \\ \bf Control? \end{tabular}   \\
			\midrule
			\begin{tabular}{c} Decision Language Model \\
   \tiny DLM  \citep{behari2024decision} \\
   \tiny SCLM \citep{verma2024balancing} \end{tabular}  & \xmark & \cmark & \xmark &  \xmark \\

			\hline
    \begin{tabular}{c} Conventional RL \\
   \tiny Eureka  \citep{ma2023eureka}\\
   \tiny Auto-MC \citep{li2024auto} \end{tabular}  & \xmark & \cmark & \xmark &  \xmark \\

			\hline
           
			\cellcolor{bgcolor} \begin{tabular}{c}  \texttt{VORTEX}  \centering  \tiny (proposed) \end{tabular} & \cellcolor{bgcolor}\cmark &\cellcolor{bgcolor} \xmark &\cellcolor{bgcolor} \cmark & \cellcolor{bgcolor}\cmark  \\			
			\hline			
		\end{tabular} 
	\end{threeparttable}
    }
\end{table}




\section{Problem Formulation}

\subsection{Problem Statement}
We consider a family of decision-making problems that can be formulated as constrained stochastic optimization, such as public health resource allocation or conservation planning. 
Formally, we define a population of $N$ units (e.g., patients, species, or locations), each with a state $s_i(t) \in \mathcal{S}$,  and an action
$a_i(t) \in \{0, 1\}$ at time $t$. Specifically, action $a_i(t)=1$ represents the unit $i$ is being allocated with a resource at time step $t$ and action $0$ otherwise. Let $P_i(s^\prime|s,a)$ be the transition probability from state $s$ to $s^\prime$ under action $a$ for unit $i$. At each decision time slot, the decision maker can choose up to $B$ units to interact, which leads to
a global \emph{budget constraint} restricts total interactions:~$\sum_{i=1}^N a_i(t) \le B, \quad \forall t.$

\paragraph{Task Utility.}
Let $R_{\text{base}, i}(t)$ denote the task-defined reward for unit $i$ at time $t$, which depends on the current state $s_i(t)$ and action $a_i(t)$, reflecting the intrinsic domain goal (e.g., a patient becoming adherent or a habitat improving). The cumulative task utility under any policy $\pi$ is:
\begin{align}
  U(\pi) = \mathbb{E}_{\pi} \left[ \sum_{t=1}^T \sum_{i=1}^N R_{\text{base}, i}(t) \right]. 
\end{align}
Hence, the goal is to find a policy $\pi$ mapping states to actions such that it maximizes the expected cumulative reward under the given dynamics\footnote{The problem in \eqref{eq:u_max} is a special case of the well-known restless multi-armed bandit problem \cite{whittle1988restless}, which can be solved by solvers in \cite{xiong2022learning,xiong2022reinforcement,mate2022field}.}:
\begin{align}\label{eq:u_max}
    \max_{\pi\in\Pi_{feasible}} U(\pi),
\end{align}
where $\Pi_{feasible}$ is the policy set that meets the interaction budget constraint $\sum_{i=1}^N a_i(t) \le B, \quad \forall t.$

\paragraph{Preference satisfaction.} 
Notice that each unit in the population is represented by a feature vector $z_i \in \mathcal{Z}$ capturing domain-specific attributes relevant to decision-making. 
In practice, human decision-makers often have additional soft or imprecise preference constraints that express high-level societal or ethical priorities,  going beyond the base task utility maximization problem in \eqref{eq:u_max}.
We provide a toy example to illustrate it as follows.
\begin{example}
Consider a public health planner allocating outreach calls to a population of pregnant women \cite{mate2022field}. 
Each woman is described by a feature vector $z_i$ consisting of:
 \textbf{Age}, 
\textbf{Education level}, and \textbf{Income level}, with each containing three levels, i.e.,  Low, Medium, High. For a feature vector $z_i=[1 0 0 1 0 0 1 0 0]$, it represents a "young low-education low-income" patient. Human decision-makers might require:

\texttt{“Prioritize elderly patients slightly more than younger patients, even if it reduces short-term utility marginally.”}
\end{example}
The soft preference constraints are encoded as functions over the empirical feature visitation distribution for policy $\pi$:
\begin{align}
    D_\pi(z) = \frac{\#~ \text{of units with  feature}~ z~\text{being served}}{\#~\text{of total units being served}}.
\end{align}
Hence, it adds to a second-dimensional objective, i.e., achieving the desired demographic distribution
\begin{align}\label{eq:divergence}
    \min_{\pi\in\Pi_{feasible}} C(\pi):= \text{Div}(D_\pi, D_{\text{preference}}),
\end{align}
  where $D_{\text{preference}}$ is the soft and imprecise preference constraint from human decision-makers, and $\text{Div}$ is a general $f$-divergence measure.

\paragraph{Multi-objective Problem.}
Our goal is to find a policy $\pi$ that simultaneously maximizes task utility $U(\pi)$ and minimizes preference deviation $C(\pi)$, with respect to the budget constraint $\sum_i a_i(t) \leq B$ for each $t$.
This defines a multi-objective constrained optimization problem:
\begin{align}\label{eq:mo_max}
    \max_{\pi\in\Pi_{feasible}} (U(\pi), ~-C(\pi)).
\end{align}

\paragraph{Pareto Frontier.}
The \emph{Pareto frontier} $\mathcal{P} \subset \mathbb{R}^2$ of \eqref{eq:mo_max} is defined as:
\[
\mathcal{P} := \left\{ (U(\pi), -C(\pi)) \,\middle|\,
\begin{aligned}
& \nexists\ \pi' \in \Pi_{\text{feasible}} \text{ such that:} \\
& U(\pi') \ge U(\pi),\\
&-C(\pi') \ge -C(\pi)
\end{aligned}
\right\}.
\]
This set characterizes policies for which no other feasible solution simultaneously improves task utility and reduces preference violation.
The set of all Pareto optimal solutions forms the Pareto frontier, representing all possible trade-offs between task utility and preference satisfaction.


\begin{remark}
Modeling human preferences as a separate optimization objective, rather than a hard constraint, offers three key advantages: it avoids feasibility issues from overly strict or conflicting constraints, enables transparent trade-offs between competing goals, and allows solution quality to be evaluated via Pareto optimality \citep{roijers2013survey}.
\end{remark}

\subsection{Challenges for \eqref{eq:mo_max}}
\begin{figure}
    \centering
    \includegraphics[width=0.95\linewidth]{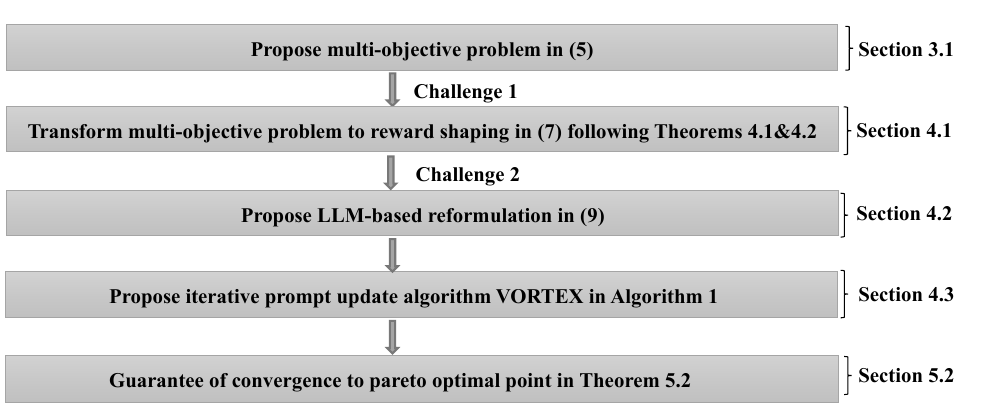}
    \caption{The main flow of contribution in this work.}
    \label{fig:flow}
    \vspace{-0.5cm}
\end{figure}
\paragraph{Challenge 1: Navigating the Pareto Frontier.}
{While weighted scalarization \cite{miettinen2012nonlinear} allows us to combine task utility and preference satisfaction into a single objective,  it requires a precise scalarization weight specified to navigate the resulting Pareto frontier to find a solution that aligns with stakeholder preferences.  This creates a significant practical barrier, often leading to a tedious trial-and-error process of tuning weights to explore the trade-off space.
To bridge this gap, we draw on scalarization theory to transform preference alignment into a reward-shaping task. This allows us to encode human preferences as an auxiliary reward signal without tuning the weight. This full method is described Section~\ref{sec:reward_shaping}.}

\paragraph{Challenge 2: Imprecise human preferences.}

{Human-specified preferences are typically high-level, or underspecified, without providing an exact quantitative target distribution. This makes the divergence objective \eqref{eq:divergence} ill-defined. To address this, we propose to treat human preferences as latent objectives that can be implicitly captured through natural language by LLMs.  In this way, the divergence term is approximated through LLM-based reward shaping and feedback, allowing us to integrate soft constraints into existing solvers without requiring formal definitions. This will be elaborated in Section \ref{sec:LLM-reformulation}.} The main flow of contribution in this work is summarized in Figure \ref{fig:flow}.

 \section{Proposed Method:~\texttt{VORTEX}}\label{sec:method}

We begin by showing that the multi-objective problem in \eqref{eq:mo_max} can be equivalently reformulated as a reward shaping problem. We then recast it as a prompt optimization problem over LLM-generated shaping rewards. Finally, we present the full \texttt{VORTEX} algorithm and its iterative optimization procedure.
\begin{figure*}
    \centering
    \includegraphics[width=1.0\linewidth]{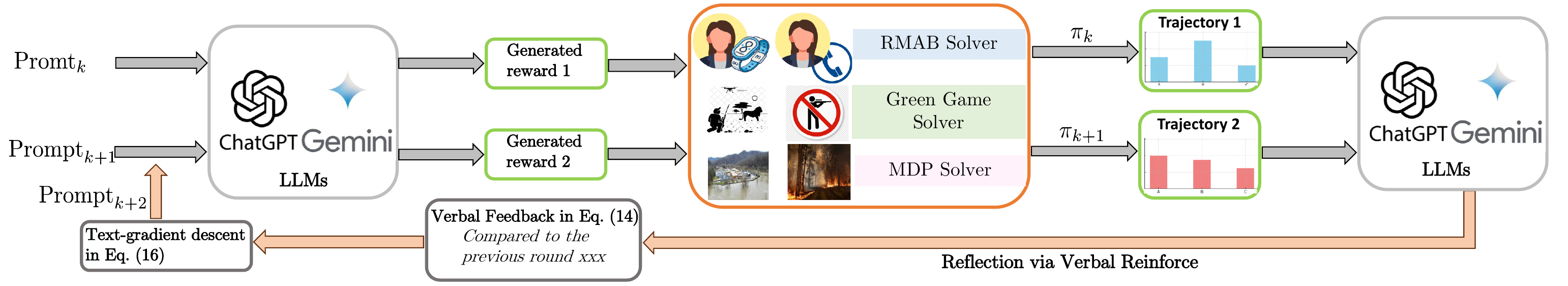}
    \caption{The detailed procedure of \texttt{VORTEX}. At each iteration, \texttt{VORTEX} compares two consecutive policy trajectories and reflects on their differences via verbal reinforcement. The resulting feedback is used to perform a text-gradient update on the LLM prompt, progressively refining the shaping reward to improve alignment.}
    \label{fig:Vortex}
    \vspace{-0.5cm}
\end{figure*}

\subsection{From Multi-objective to Reward Shaping}\label{sec:reward_shaping}

Provided the challenge that existing solvers fail for the multi-objective problem in \eqref{eq:mo_max}, we leverage the scalarization theory \cite{miettinen2012nonlinear} of multi-objective optimization to translate \eqref{eq:mo_max} into a reward shaping problem. Therefore, it maintains
compatibility with existing solvers. 
 
\paragraph{Scalarization Theory}

We establish the theoretical foundation for converting multi-objective optimization to reward shaping through the following theorem.

\begin{theorem}[Weighted Scalarization\cite{miettinen2012nonlinear}]
\label{thm:scalarization}
For the multi-objective problem defined in \eqref{eq:mo_max}, a policy $\pi^*$ is Pareto optimal if and only if there exists a weight $\lambda \in [0,1]$ such that $\pi^*$ is an optimal solution to the scalarized problem:
\begin{align}
\max_{\pi} J_{\lambda}(\pi) = \lambda U(\pi) - (1-\lambda) C(\pi).
\end{align}
\end{theorem}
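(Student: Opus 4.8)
The plan is to prove the two implications of the equivalence separately, viewing the scalarized objective $J_\lambda(\pi) = \lambda U(\pi) - (1-\lambda) C(\pi)$ as a linear combination of the two objectives with weights $(\lambda, 1-\lambda)$ ranging over the probability simplex. Throughout, I would work with the attainable objective region $\mathcal{Y} := \{(U(\pi), -C(\pi)) : \pi \in \Pi_{feasible}\} \subset \mathbb{R}^2$, so that a maximizer of $J_\lambda$ corresponds to a point of $\mathcal{Y}$ supported by a hyperplane with normal $(\lambda, 1-\lambda)$. I would also read the Pareto definition in the standard way, namely that no feasible $\pi'$ weakly dominates $\pi^*$ in both coordinates while strictly improving at least one.

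For the \emph{sufficiency} direction, I would show that a maximizer of $J_\lambda$ is Pareto optimal by contradiction. Suppose $\pi^*$ maximizes $J_\lambda$ but is dominated by some feasible $\pi'$ with $U(\pi') \ge U(\pi^*)$ and $-C(\pi') \ge -C(\pi^*)$, at least one strict. For $\lambda \in (0,1)$ both weights are strictly positive, so $J_\lambda(\pi') = \lambda U(\pi') + (1-\lambda)(-C(\pi')) > \lambda U(\pi^*) + (1-\lambda)(-C(\pi^*)) = J_\lambda(\pi^*)$, contradicting optimality. I would note that the endpoints $\lambda \in \{0,1\}$ yield only weak Pareto optimality (a maximizer of $U$ alone may be dominated in the $-C$ coordinate), so the clean equivalence is cleanest for interior weights, with the boundary recovered by restricting to unique maximizers or to the weak-Pareto notion.

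For the \emph{necessity} direction, the substantive claim is that every Pareto-optimal $\pi^*$ arises as a $J_\lambda$-maximizer for some $\lambda$. Here I would use a supporting-hyperplane argument on the upper-comprehensive hull $\mathcal{Y}^- := \{(u,v) : u \le U(\pi),\ v \le -C(\pi) \text{ for some feasible } \pi\}$. If this set is convex, then at the boundary point $(U(\pi^*), -C(\pi^*))$ there exists a supporting hyperplane whose normal $(\lambda, 1-\lambda)$ has nonnegative components, precisely because Pareto optimality forbids any attainable point lying strictly to the upper right. After normalizing the normal onto the simplex, the supported linear functional is exactly $J_\lambda$, which $\pi^*$ must therefore maximize.

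The hard part — and the only place the result can genuinely fail — is establishing convexity of $\mathcal{Y}^-$, since linear scalarization provably cannot recover Pareto points sitting in a nonconvex ``dent'' of the frontier. I would secure convexity from the problem structure: passing to randomized (mixed) policies convexifies the attainable set, because the induced occupancy measures form a convex polytope (the budget constraint $\sum_i a_i(t) \le B$ is linear), $U(\pi)$ is linear in the occupancy measure, and $-C(\pi)$ is concave since $C$ is an $f$-divergence of the feature-visitation distribution $D_\pi$, which is convex in the occupancy measure. Under these structural facts $\mathcal{Y}^-$ is convex and the supporting-hyperplane step goes through; I would explicitly flag that without randomization, or for a nonconvex $C$, the ``only if'' direction requires exactly the convexity hypothesis implicit in the cited scalarization theory.
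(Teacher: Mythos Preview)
Your proposal is correct and follows essentially the same approach as the paper: sufficiency by the standard domination-contradiction argument, and necessity via a separating/supporting hyperplane at the Pareto point. Your treatment is in fact more careful than the paper's own proof, which invokes the separating hyperplane theorem without spelling out the convexity hypothesis on the attainable set or the caveat about the endpoint weights $\lambda \in \{0,1\}$ that you correctly flag.
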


We now show how the scalarized objective can be reformulated as a reward shaping problem in the following theorem.

\begin{theorem}[Multi-objective to Reward Shaping]
\label{thm:mo_to_shaping}
The scalarized multi-objective problem is equivalent to optimizing a single objective with shaped rewards:
\begin{align}
\max_{\pi} \!J_{\lambda}(\pi) \!=\! \mathbb{E}_{\pi}\left[\sum_{i=1}^N \sum_{t=1}^{T} R_{\text{shaped},i}(s_i(t), a_i(t), z_i)\right],
\end{align}
where the shaped reward is defined as:
\begin{align}
R_{\text{shaped},i}(s, a, z_i) = R_{\text{base},i}(s, a) + R_h(z_i),
\end{align}
with shaping reward being
$
R_h(z) \propto \frac{1-\lambda}{\lambda} \cdot \frac{\partial}{\partial D_\pi(z)} \text{Div}(D_{\pi} \| D_{\text{preference}}).
$
\end{theorem}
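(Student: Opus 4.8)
The plan is to reduce the scalarized objective $J_{\lambda}(\pi) = \lambda U(\pi) - (1-\lambda) C(\pi)$ to a single additive reward by two moves: exploiting invariance of the argmax under positive rescaling, and linearizing the divergence so that it becomes a sum of per-unit contributions indexed by the feature $z_i$. First I would divide through by $\lambda > 0$, so that maximizing $J_{\lambda}$ is equivalent to maximizing $U(\pi) - \frac{1-\lambda}{\lambda} C(\pi)$. The utility term is already in the desired form $U(\pi) = \mathbb{E}_{\pi}[\sum_{i,t} R_{\text{base},i}(s_i(t),a_i(t))]$, so the entire difficulty concentrates in rewriting $C(\pi) = \text{Div}(D_\pi \| D_{\text{preference}})$ as an expectation of per-unit terms.

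The core step is a first-order expansion of the divergence viewed as a functional of the visitation vector $D_\pi = (D_\pi(z))_{z \in \mathcal{Z}}$. Writing $g(z) := \frac{\partial}{\partial D_\pi(z)}\text{Div}(D_\pi \| D_{\text{preference}})$, I would Taylor-expand around a fixed reference distribution and absorb all $\pi$-independent quantities into a constant, obtaining $C(\pi) = \sum_{z} g(z)\, D_\pi(z) + \text{const}$. Next I would unfold the definition $D_\pi(z) = \frac{1}{M}\,\mathbb{E}_{\pi}[\sum_{i:z_i=z}\sum_t a_i(t)]$, where $M$ is the total number of served units; under a tight, fully-used budget $\sum_i a_i(t)=B$ the count $M$ is constant across feasible policies and can be treated as a fixed normalizer. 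Substituting yields $\sum_z g(z) D_\pi(z) = \frac{1}{M}\mathbb{E}_{\pi}[\sum_{i,t} g(z_i)\, a_i(t)]$, i.e. a sum of per-unit shaping signals collected on the served units.

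Combining the two pieces gives $U(\pi) - \frac{1-\lambda}{\lambda}C(\pi) = \mathbb{E}_{\pi}[\sum_{i,t}(R_{\text{base},i}(s_i(t),a_i(t)) + R_h(z_i))] + \text{const}$, where setting $R_h(z) = -\frac{1-\lambda}{\lambda M}\, g(z)$ recovers exactly the claimed shaped reward, the symbol $\propto$ in the statement absorbing both the sign and the constant factor $1/M$. Since adding a $\pi$-independent constant does not change the argmax, the optimizers of $J_{\lambda}$ and of the shaped objective coincide, which is the asserted equivalence; chaining this with Theorem~\ref{thm:scalarization} then ties the shaped-reward optima back to the Pareto frontier.

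I expect the main obstacle to be the linearization step, because $C(\pi)$ is a genuinely nonlinear functional of the aggregate distribution $D_\pi$ and is not additively separable over $(i,t)$; the decomposition into per-unit rewards is therefore exact only at first order, so that $R_h$ encodes the steepest-descent direction of the divergence at the operating point rather than the divergence itself. Making this rigorous requires either committing to a fixed linearization point or phrasing the equivalence in the gradient/stationarity sense. A secondary subtlety is the normalization by $M$: the clean per-unit form relies on the total served count being constant, so I would either invoke the tight-budget regime or carry the mild dependence of $M$ on $\pi$ explicitly and verify it contributes only higher-order terms.
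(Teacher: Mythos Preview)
Your proposal is correct and follows essentially the same route as the paper: rescale by $\lambda$, first-order Taylor-expand the divergence in $D_\pi$ to get $\sum_z g(z) D_\pi(z)+\text{const}$, invoke the tight-budget constancy $M=BT$ to convert the linearized term into a per-unit expectation, and absorb constants into the shaping reward $R_h(z)=-\tfrac{1-\lambda}{\lambda M}\,g(z)$. The paper makes the identical first-order approximation (writing $\approx$ rather than claiming exact equality) and the same budget-constancy argument; your discussion of the two caveats---linearization validity and the normalizer $M$---is, if anything, more explicit than the paper's own treatment.
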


\begin{corollary}[Parameter Interpretation]
The scalarization parameter $\lambda$ has the following interpretation: 1) $\lambda \to 1$: focus primarily on task utility;
2) $\lambda \to 0$: focus primarily on preference satisfaction.
\end{corollary}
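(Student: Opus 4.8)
The plan is to reduce the statement to a direct inspection of the scalarized objective $J_{\lambda}(\pi)=\lambda U(\pi)-(1-\lambda)C(\pi)$ supplied by \cref{thm:scalarization}, treating each limiting regime of $\lambda$ separately. First I would consider $\lambda\to 1$: the coefficient multiplying the task utility tends to $1$ while the coefficient $-(1-\lambda)$ multiplying the preference deviation tends to $0$, so the objective collapses to
\begin{align*}
\lim_{\lambda\to 1}J_{\lambda}(\pi)=U(\pi),
\end{align*}
and the maximizer of $J_{\lambda}$ therefore optimizes task utility alone, establishing claim (1). Symmetrically, for $\lambda\to 0$ the utility coefficient vanishes and the objective reduces to $-C(\pi)$, so maximizing $J_{\lambda}$ becomes equivalent to minimizing the preference deviation $C(\pi)$, which is exactly claim (2).

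Next I would corroborate this reading through the shaping-reward form of \cref{thm:mo_to_shaping}, where the auxiliary reward satisfies $R_h(z)\propto \frac{1-\lambda}{\lambda}\,\frac{\partial}{\partial D_\pi(z)}\,\text{Div}(D_{\pi}\|D_{\text{preference}})$. The ratio $\frac{1-\lambda}{\lambda}$ is the natural knob: as $\lambda\to 1$ it tends to $0$, so $R_h$ is negligible against $R_{\text{base}}$ and the shaped reward reverts to the base task reward; as $\lambda\to 0$ it tends to $+\infty$, so the shaping term dominates and preference alignment governs the solution. This yields a consistent reward-level narrative for the same two regimes and ties the scalar $\lambda$ to the magnitude of the preference signal actually injected into the solver.

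The one point requiring genuine care — and the step I expect to be the main obstacle — is making the word ``primarily'' precise at the two endpoints, since the limits are degenerate: at $\lambda=1$ the preference term disappears entirely, and at $\lambda=0$ both the utility coefficient vanishes and the shaping ratio $\frac{1-\lambda}{\lambda}$ diverges. To handle this rigorously I would argue at the level of maximizers rather than objective values, invoking continuity of $\pi\mapsto J_{\lambda}(\pi)$ together with compactness of the feasible policy set $\Pi_{\text{feasible}}$ to ensure that $\arg\max_{\pi}J_{\lambda}(\pi)$ behaves upper-semicontinuously in $\lambda$ and converges to a maximizer of the limiting single-objective problem. This upgrades the limiting-value computation into the claimed qualitative behavior of the optimal policy. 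The remaining algebra is routine substitution, so I would keep it brief and foreground the limiting and continuity argument instead.
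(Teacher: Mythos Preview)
Your proposal is correct and follows the same line the paper implicitly relies on: read off the limiting behavior directly from the scalarized objective $J_\lambda(\pi)=\lambda U(\pi)-(1-\lambda)C(\pi)$ of \cref{thm:scalarization} and, equivalently, from the ratio $\frac{1-\lambda}{\lambda}$ in the shaping reward of \cref{thm:mo_to_shaping}. The paper does not supply a separate proof of this corollary at all --- it is stated as an immediate consequence of those formulas --- so your upper-semicontinuity argument for $\arg\max_\pi J_\lambda(\pi)$, while a nice touch of rigor, goes well beyond what the paper deems necessary; the intended reading is simply the coefficient inspection you give in your first two paragraphs.
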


\begin{remark}
The key insight is that the shaping reward $R_h(z)$ depends on how far the current feature distribution $D_\pi(z)$ deviates from the preference distribution, adjusted by the relative importance weights. In practice, this theoretical form guides the LLM's reward generation through natural language feedback.
\end{remark}

This theoretical framework establishes that any Pareto optimal solution to the multi-objective problem can be found by solving a single-objective problem with appropriately shaped rewards, providing the mathematical foundation for our approach in Section \ref{sec:LLM-reformulation}.

\subsection{LLM-based Reformulation of \eqref{eq:mo_max}}\label{sec:LLM-reformulation}

According to Theorem~\ref{thm:mo_to_shaping}, the multi-objective optimization problem can be addressed through appropriate reward shaping. To tackle the second challenge, we therefore propose a novel framework that enables LLMs to generate an auxiliary reward term $R_h$, shifting the focus to prompt optimization for producing high-quality, preference-aligned shaping rewards.

\paragraph{Joint Optimization Objective.}

Let $\texttt{Prompt}$ denote the input to the LLM, consisting of a fixed task description and an adaptive feedback component that evolves over time. Given $\texttt{Prompt}$, the LLM generates a shaping reward vector $R_h$, which is combined with the base reward and passed to a solver to produce a policy $\pi$. As a result, the original multi-objective problem in \eqref{eq:mo_max} can be reformulated as:
\begin{align}
    \label{eq:MO-LLM}
    \nonumber&\max_{~\texttt{Prompt}} 
    \Big(
    \underbrace{U(\pi)}_{\text{task utility}} 
    , \;
    - \underbrace{C(\pi, R_h)}_{\text{preference violation}}
    \Big)\\  
    &\text{s.t.} ~
R_h = \text{LLM}(\texttt{Prompt}), 
\pi= \text{Solver}(R_{\text{base}} + R_h).
\end{align}
Here, $C(\pi, R_h)$ measures the divergence between the state-feature visitation distribution induced by policy $\pi$ and the desired preference pattern (e.g., demographic fairness), and the solver obeys all operational constraints, such as budget.

\paragraph{Solver and Reward Composition.}

Given a reward function combining base and shaped components, the solver optimizes the policy under feasibility constraints:
\begin{align}\label{eq:solver}
\pi = \arg\max_{\pi \in \Pi_{\text{feasible}}} 
\!\! \!\!\mathbb{E}_\pi \left[ \sum_{t=1}^T \sum_{i=1}^N \left( R_{\text{base},i}(t) + R_{h}(z_i) \right) \right].
\end{align}

\paragraph{Pareto Frontier Navigation Via Language Models.}
 Rather than manually tuning scalarization weights (e.g., Lagrange multipliers), our framework leverages LLMs to shape rewards through iterative natural language reflection. In the formulation \eqref{eq:MO-LLM}, the LLM serves as a bridge between human intent and formal optimization, translating imprecise preferences into a shaped reward $R_h$ that complements the base objective. By iteratively refining $R_h$ based on observed trade-offs between utility $U$ and preference violation $C$, the framework progressively steers the policy toward better alignment along the Pareto frontier. See Figure~\ref{fig:pareto} for an illustration.

\begin{table}[ht]
\centering
\caption{Comparison of consecutive trajectories and feature-specific reward adjustment suggestions in the public health domain.}
\label{tab:feature_feedback}
\begin{tabular}{|l|c|c|c|}
\hline
\textbf{Metric} & \textbf{Episode $k$} & \textbf{Episode $k+1$} & \textbf{Change} \\
\hline
Utility & 82.4 & 80.7 & $-1.7$ \\
Coverage & 22.0\% & 27.5\% & $+5.5$\% \\
\hline
\multicolumn{4}{|p{13cm}|}{%
\textbf{Verbal Feedback:}
\textit{
“Compared to the previous round, reward decreased slightly (–1.7), but coverage improved (+5.5\%).  
To further improve alignment:  
(1) \textbf{Increase shaping reward for patients under age 25};  
(2) \textbf{Decrease shaping reward for patients over age 60}, who are frequently selected but yield no benefit;  }
} \\
\hline
\end{tabular}
\end{table}

\subsection{\texttt{VORTEX}: Description}
Our proposed method, \texttt{VORTEX}, is an iterative algorithm that refines the LLM-generated shaping reward, $R_h$, through a closed loop of generation, execution, and reflection. The goal is to progressively steer the system's policy toward a desirable point on the Pareto frontier of task utility and preference satisfaction.
The core of \texttt{VORTEX} is an iterative process where each step builds upon the last. At each episode $k$, the algorithm executes four main steps: (1) Reward Generation, (2) Policy Execution and Evaluation, (3) Verbal Reinforcement via Reflection, and (4) Text-Gradient Prompt Optimization. This entire workflow is illustrated in Figure~\ref{fig:Vortex}.

\paragraph{Step 1: LLM-Powered Reward Generation.}
Each episode begins by constructing a prompt, \texttt{Prompt}$_k$ as 
\begin{quote}
\noindent\texttt{
 \textbf{Task:} Generate a shaping reward vector to encourage slightly higher coverage for demographic group X while preserving high total reward.\\
\textbf{Context:} Each arm has state $s_i$ and features $z_i$.\\
\textbf{Reflection:} Previous round achieved xx\% of max reward but only xx\% coverage for group G.\\
\textbf{Instruction:} Assign additional reward values to each arm to improve group G coverage with minimal reward loss.\\
\textbf{Output:} A reward function $R_h$ representing the preference-aligned shaping reward.
}
\end{quote}
The LLM processes this comprehensive prompt to generate a new shaping reward, 
\begin{align}\label{eq:reward_gen}
R^{k}_{h} = \text{LLM}(\texttt{Prompt}_{k}).
\end{align}

\paragraph{Step 2: Policy Execution and Evaluation.}
The generated shaping reward, $R_h^k$, is added to the base task reward, $R_{\text{base}}$. This combined reward function is then passed to the pre-existing, unmodified domain-specific solver. The solver, operating under its constraints (e.g., budget $B$), calculates the optimal policy $\pi_k$.
This policy is then deployed in the environment to collect a trajectory, 
\[
\tau^k = \big\{ (s_i(t),  a_i(t), z_i, R_{\text{base},i}(t)), \forall i \big\}_{t=1}^T.
\]
Let the previous trajectory be $\tau^{k-1}$, obtained from policy $\pi^{k-1}$.
For both trajectories, we compute the total expected reward
\begin{align}\label{eq:utility}
     U_k = \mathbb{E}_{\pi^k} \left[ \sum_{t=1}^T \sum_{i} R_{\text{base}, i}(t) \right],
\end{align}
and  empirical feature distribution
\begin{align}\label{eq:feat_dist}
    D_k(z)\! =\! \frac{\#~ \text{of unit with  feature}~ z~\text{being served}}{\#~\text{of total units being served}}.
\end{align}

\paragraph{Step 3: Verbal Reinforcement via Trajectory Comparison.}
This step serves as the reflective engine of \texttt{VORTEX}. It synthesizes performance changes into structured, actionable feedback. The system compares the metrics from the current trajectory $\tau^k$ with the previous one $\tau^{k-1}$
by computing changes across iterations:
\begin{align}\label{eq:diff}
    \delta_U = U_k - U_{k-1}, \qquad
\delta_D = D_k(z) - D_{k-1}(z).
\end{align}
This quantitative comparison is then translated into qualitative, natural language feedback, which we term Verbal Reinforcement. This can be handled by a function or a separate LLM call,
\begin{align} \label{eq:verbal_feedback}\text{Feedback}^k_{\text{verbal}} = f \big( \delta_U,  \delta_D \big).   
\end{align}
 The feedback explicitly states the trade-off that was observed and provides concrete suggestions for the next iteration. An example of this generated feedback is shown in Table \ref{tab:feature_feedback}.
 
\paragraph{Step 4: Text-Gradient Prompt Optimization}

In classical reinforcement learning, policies are updated via gradient descent $
\theta \leftarrow \theta - \eta \nabla_\theta J(\theta).
$
In our framework, we operate in the space of prompts rather than parameters. We decompose the prompt into two disjoint components:
\begin{align}
\texttt{Prompt}_k = P_\texttt{Fix} \,\|\, P_{\texttt{Editable},k},
\end{align}
where $P_\texttt{Fix}$ contains static information about the task, input format, and solver API that remains unchanged across iterations, and $P_{\texttt{Editable},k}$ is the dynamic portion, refined at each step using semantic feedback.
The update rule mimics gradient-based optimization but operates in the space of text.  The verbal feedback from Step 3 serves as the "text-gradient," which is appended to the editable portion of the prompt:
\begin{align}
P_{\texttt{Editable},k+1} \leftarrow P_{\texttt{Editable},k} + \text{Feedback}^k_{\text{verbal}},
\end{align}
and construct the new prompt as:
\begin{align}\label{eq:prompt_update}
\texttt{Prompt}_{k+1} = P_{\texttt{Fix}} \,\|\, P_{\texttt{Editable},k+1}.
\end{align}
 This updated prompt, now containing a historical account of what has been tried and a clear directive for what to do next, is fed to the LLM in the next iteration (Step 1). This iterative refinement loop, summarized in Algorithm \ref{alg:dueling_prompt_opt}, continues until a satisfactory trade-off between utility and preference satisfaction is achieved or a set number of episodes is completed.

\begin{remark}
    Our framework treats the LLM as an adaptive reward generator, using trajectory-level comparisons to produce verbal feedback that acts as a soft “gradient” in prompt space. This enables iterative refinement of shaping rewards toward better utility-preference trade-offs, without requiring to specify trade-off weights in \cite{hayes2022practical}.
\end{remark}

\begin{remark}[Multi-Run Pareto Exploration]
While each execution of \texttt{VORTEX} converges to a single Pareto-optimal point (Theorem \ref{thm:convergence} in Section \ref{sec:theory}), stakeholders may wish to explore different trade-offs between task utility and preference satisfaction. Our framework naturally supports this through multiple runs, where each iteration is conditioned on feedback from the previous result. If stakeholders find the current balance unsatisfactory-for instance, preferring higher coverage despite reduced efficiency-they can provide directional feedback that guides the next run toward a different region of the Pareto frontier. This iterative refinement process allows practical navigation of trade-offs without requiring explicit weight specification.
\end{remark}

\begin{algorithm}[ht]
\caption{\texttt{VORTEX}: Verbal-guided Optimization with Reward Tuning via Experiential Trajectory Exploration}
\label{alg:dueling_prompt_opt}
\begin{algorithmic}[1]
\REQUIRE Initial $\texttt{Prompt}_0$, task domain, base reward $R_\text{base}$, budget $B$, number of episodes $K$;
\STATE Initialize $\tau^{-1} \leftarrow$ random baseline (or empty trajectory)
\FOR{$k = 0$ to $K-1$}
    \STATE \textbf{(LLM)} Generate shaping reward $R_h^k $ following \eqref{eq:reward_gen};
    \STATE \textbf{(Solver)} Solve policy 
       $\pi^k $ according to \eqref{eq:solver};
    \STATE \textbf{(Execute)} Deploy $\pi^k$ in $\mathcal{E}$ to collect trajectory $\tau^k$;
    \STATE \textbf{(Compute)} Evaluate task utility $U_k$ in \eqref{eq:utility}
and feature distribution $D_k(z)$ in \eqref{eq:feat_dist};
       
    \IF{$k > 0$}
        \STATE \textbf{(Compare)} Compute difference:
        $\delta_U, \delta_\Delta$ in \eqref{eq:diff} ;
        \STATE \textbf{(Feedback)} Generate verbal reflection by \eqref{eq:verbal_feedback};
        \STATE \textbf{(Text-Gradient)} Update prompt based on \eqref{eq:prompt_update};
    \ENDIF
\ENDFOR
\STATE \textbf{Return:} Final $\texttt{Prompt}_K$, policy $\pi^K$;
\end{algorithmic}
\end{algorithm}

\section{Theoretical Guarantee}
\label{sec:theory}

We analyze the convergence properties of our iterative framework \texttt{VORTEX}. The key challenge is that we are optimizing in the space of reward shaping functions to explore the Pareto frontier.
We first make some necessary assumptions, and then present the main result.

\subsection{Convergence Analysis}

\begin{assumption}
We make the following assumptions:
\begin{enumerate}
 
    \item \textbf{(A1) Solver Optimality:} For given shaping reward $R_h$, the external solver returns globally optimal policy $\pi^*(R_h)$.
    \item \textbf{(A2) Preference Convexity:} The divergence term $C(\pi)$ is convex with respect to the feature distribution.
    \item \textbf{(A3) Text-Gradient Quality:} The verbal reinforcement provides an unbiased (or bounded-bias) stochastic estimate of gradient $\nabla_{R_h} [ J_{\lambda}(R_h) ]$ with bounded variance.
\end{enumerate}
\end{assumption}

Assumption  (A1) abstracts away the complexity of the underlying optimization problem by treating the solver as a reliable oracle, realistic in many applications. It allows us to focus the analysis on the behavior of the LLM-driven reward shaping mechanism rather than solver errors. Assumption (A2) ensures the existence of gradients and well-behaved optimization landscape, which can be easily satisfied when $f$-divergence is KL-divergence or total variation. Assumption (A3) is the most critical, requiring that LLM feedback provides meaningful directional information with vanishing bias, which is supported by the structured nature of trajectory comparisons.

\subsection{Main Result}
\begin{theorem}[Convergence to Pareto Optimal Point]
 If the LLM's preference encoding corresponds to some implicit scalarization weight $\lambda$,
under Assumptions (A1)--(A3), the proposed iterative \texttt{VORTEX} converges almost surely to a stationary point $R^*_h$ 
such that the resulting policy $\pi^*(R^*_h)$ achieves a Pareto optimal trade-off: 
\begin{align*}
  \Big( 
U(R^*_h) = \mathbb{E}_{\pi^*(R^*_h)} [R_{base}],
\;
C(R^*_h) = - C(\pi^*(R^*_h))
\Big)\in\mathcal{P}.
\end{align*}
\label{thm:convergence}
\end{theorem}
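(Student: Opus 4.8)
The plan is to reduce the convergence claim to a statement about stochastic gradient ascent on the scalarized objective $J_\lambda$, and then invoke the scalarization theory already established to upgrade ``stationary point'' to ``Pareto optimal point.'' Concretely, I would view the iterative prompt update of Step 4 as a noisy gradient step in the space of shaping rewards $R_h$. The text-gradient update rule $P_{\texttt{Editable},k+1} \leftarrow P_{\texttt{Editable},k} + \text{Feedback}^k_{\text{verbal}}$ induces a corresponding update on the generated reward vector $R_h^k$; by Assumption (A3), the verbal feedback encodes an unbiased (or bounded-bias) estimate $g_k$ of $\nabla_{R_h} J_\lambda(R_h^k)$ with bounded variance, so the induced dynamics are
\begin{align*}
R_h^{k+1} = R_h^{k} + \eta_k \, g_k, \qquad \Expect[g_k \mid \cF_k] = \nabla_{R_h} J_\lambda(R_h^k) + b_k,
\end{align*}
where $\cF_k$ is the history up to episode $k$ and $\|b_k\|$ is the bias.

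The key steps, in order, are as follows. First, I would make precise the map from prompts to rewards to policies: by Assumption (A1) the solver returns $\pi^*(R_h)$ globally optimally, so $J_\lambda$ can be regarded as a well-defined (deterministic) function of $R_h$ alone, and the composite objective $J_\lambda(R_h) := \lambda U(\pi^*(R_h)) - (1-\lambda) C(\pi^*(R_h))$ inherits regularity from Assumption (A2). Second, I would establish that $J_\lambda$ is bounded above (utilities are bounded by $T$, $N$, and the reward range; $C$ is a bounded $f$-divergence) and that its gradient is well-defined and Lipschitz on the compact domain of admissible $R_h$, using (A2) to guarantee smoothness of the landscape. Third, with a standard Robbins--Monro step-size schedule $\sum_k \eta_k = \infty$, $\sum_k \eta_k^2 < \infty$, I would apply a classical stochastic-approximation / supermartingale convergence argument (e.g., the Robbins--Siegmund theorem) to conclude that $R_h^k$ converges almost surely to a stationary point $R_h^*$ with $\nabla_{R_h} J_\lambda(R_h^*) = 0$. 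Finally, I would invoke Theorem~\ref{thm:scalarization} and Theorem~\ref{thm:mo_to_shaping}: since $\pi^*(R_h^*)$ is an optimal solution to the scalarized objective $J_\lambda$ for the implicit weight $\lambda$, the scalarization equivalence guarantees the resulting pair $(U(R_h^*), -C(R_h^*))$ lies on the Pareto frontier $\cP$.

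The main obstacle will be Step 3 combined with the bias term in Assumption (A3). If the verbal feedback is merely \emph{bounded-bias} rather than truly unbiased, the iterates converge only to a neighborhood of a stationary point whose radius scales with the bias, so the clean ``almost surely to $R_h^*$'' statement requires the bias to vanish asymptotically (e.g., $\|b_k\| \to 0$ or $\sum_k \eta_k \|b_k\| < \infty$); I would state this as the operative reading of (A3). A secondary subtlety is that the map $R_h \mapsto \pi^*(R_h)$ need not be continuous in general, since the argmax over $\Pi_{\text{feasible}}$ can jump at ties; to keep $J_\lambda$ differentiable I would either assume the optimal policy is unique for almost every $R_h$ or appeal to the convexity in (A2) to smooth the induced objective. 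Handling the discrete/combinatorial nature of the budget-constrained solver rigorously is the genuinely delicate part; everything downstream of a well-defined smooth $J_\lambda(R_h)$ is standard stochastic approximation.
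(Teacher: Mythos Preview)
Your proposal is structurally identical to the paper's proof: both interpret the prompt update as stochastic gradient ascent on the scalarized objective $J_\lambda(R_h)$, invoke stochastic-approximation theory to obtain almost-sure convergence to a stationary point, and then appeal to scalarization (Theorem~\ref{thm:scalarization}) to conclude Pareto optimality. The only substantive difference is the SA machinery: the paper uses the ODE method (Kushner--Clark / Borkar), tracking the continuous-time flow $\dot{R}_h = \nabla_{R_h} J_\lambda(R_h)$ and applying LaSalle's invariance principle with $J_\lambda$ itself as Lyapunov function, whereas you propose the Robbins--Siegmund supermartingale route directly. Both are standard and interchangeable here. The paper handles your ``secondary subtlety'' about differentiability of $R_h \mapsto J_\lambda(R_h)$ by invoking the envelope theorem (under Assumption (A1)) rather than arguing uniqueness of the argmax; your observation about the bounded-bias reading of (A3) is in fact more careful than the paper, which silently assumes $\mathbb{E}[\xi^k \mid \mathcal{F}_{k-1}] = 0$ in its proof despite the assumption text allowing bias.
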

\vspace{-0.7cm}
\begin{proof}[Proof Sketch]
Our proof sketch involves three main parts. We begin by reformulating the bi-objective problem as a single, differentiable scalarized objective. We then use stochastic approximation theory to relate our algorithm's discrete update rule to a continuous-time ODE. Finally, we use a Lyapunov-based analysis to show that the ODE converges to stationary points, and we prove these points are Pareto-optimal solutions to the original problem, guaranteeing convergence. Detailed proof can be found in Appendix B in the supplementary materials.
\end{proof}

\begin{remark}
  Our convergence analysis offers several key takeaways:
 Under mild assumptions, the iterative procedure converges to a fixed point where the LLM no longer modifies its shaping reward; The converged solution lies on the Pareto frontier, achieving a balance between task utility and human preference satisfaction.

\end{remark}

\section{Experiments}
To evaluate \texttt{VORTEX}, we simulate a constrained public health intervention scenario inspired by the ARMMAN maternal health setting \cite{mate2022field,behari2024decision} and a conservation setting \cite{qian2016restless}. For fair comparison with SOTA baseline DLM in \cite{behari2024decision}, we use
Gemini-2.5-Pro as the LLM in our experiments. We present the main results for the public health domain in this section, and
detailed setting descriptions and more results can be found in Appendix C in the supplementary materials.

\paragraph{Environment abstract.}  
We simulate a population of 800 mothers, evenly partitioned into 8 demographic types based on three binary features: \textbf{Income:} Low / High;
 \textbf{Education:} Low / High;
\textbf{Age:} Young  / Old.
At each round, the planner is allowed to intervene with up to $B = 400$ mothers.
\textbf{Preference requirement:}  
The human decision-maker specifies a soft equity preference such as:  \emph{``Slightly prefer mothers with specific features, such as low income, low education, young age.''}
 In this simulation, we consider 6 different preferences as favor high/low income(HI/LI), high/low education (HE/LE), Old, and Young.


\subsection{Results}
\paragraph{Effectiveness of Reward Shaping.} The effectiveness of the proposed reward shaping technique is evaluated by visualizing the trade-off between task utility and human preference satisfaction.

\begin{wrapfigure}{r}{0.68\textwidth}
    \vspace{-0.3cm}
    \centering
    \includegraphics[width=0.68\textwidth]{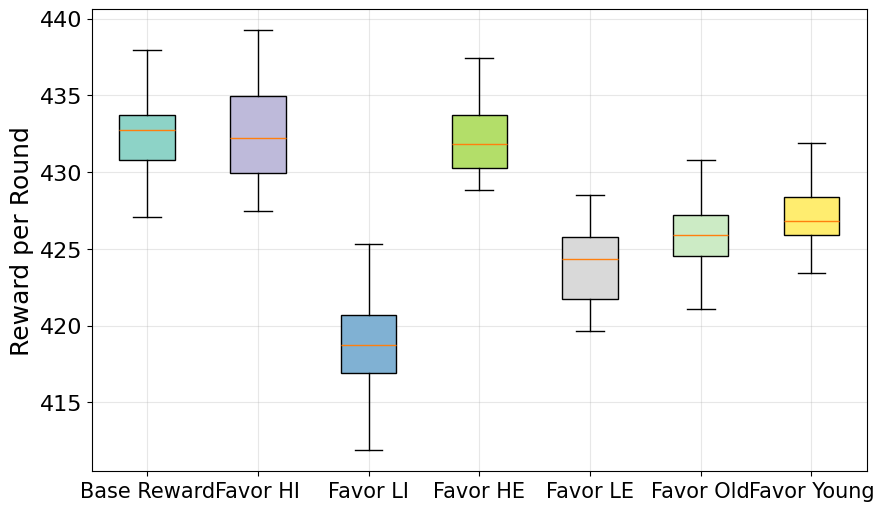}
    \vspace{-0.5cm}
    \caption{Reward comparison.}
    \label{fig:figure_reward}
    \vspace{-0.5cm}
\end{wrapfigure}
As shown in Figure \ref{fig:figure_reward}, the "Base Reward" policy, optimized solely for utility, achieves the highest performance with a median reward of approximately 432. As soon as any human preference is introduced via reward shaping, total utility declines. This result highlights the inherent cost of alignment, demonstrating that satisfying qualitative preferences requires a trade-off with the unconstrained performance metric.
\begin{figure}
    \centering
    \includegraphics[width=0.89\textwidth]{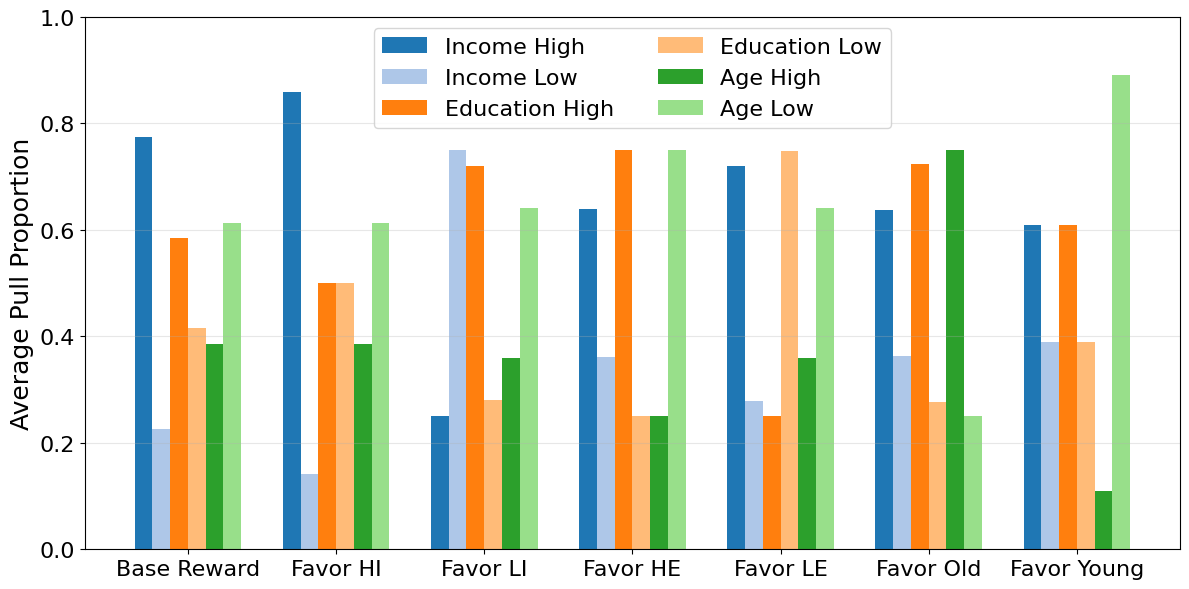}
    \caption{Coverage ratio comparison.}
    \label{fig:figure_coverage}
   
\end{figure}

 Figure \ref{fig:figure_coverage} illustrates the success of reward shaping in enforcing preferences. While the "Base Reward" policy is clearly biased (e.g., allocating resources 0.78 to high-income vs. 0.22 to low-income individuals), this is effectively corrected when a preference is applied. For instance, under the "Favor LI" condition, the pull proportion for LI individuals substantially increases to 0.75. This successful shift towards the targeted group is observed across all tested preference conditions.

 \begin{figure}[htbp]
  \centering
  \begin{subfigure}[t]{0.445\textwidth} 
    \centering
\includegraphics[width=\textwidth]{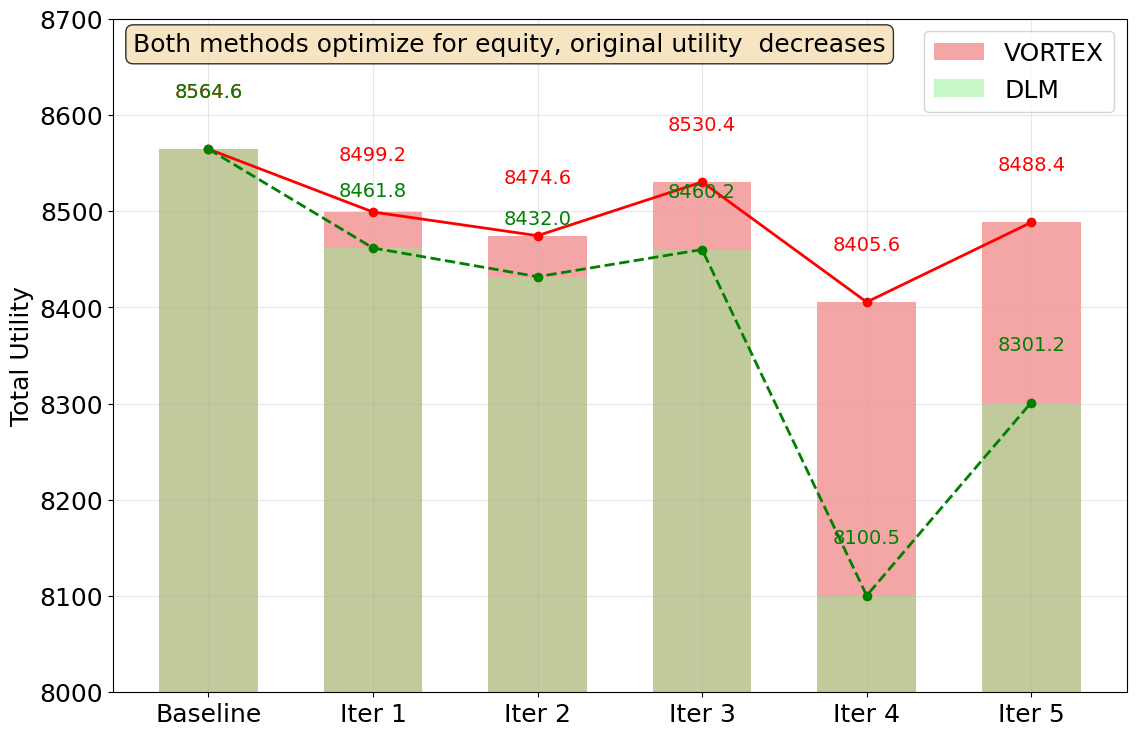}
    \caption{Total utility (favor LI)}
    \label{fig:figure_utility_compare}
  \end{subfigure}%
  \hfill
  \begin{subfigure}[t]{0.445\textwidth}
    \centering
 \includegraphics[width=\textwidth]{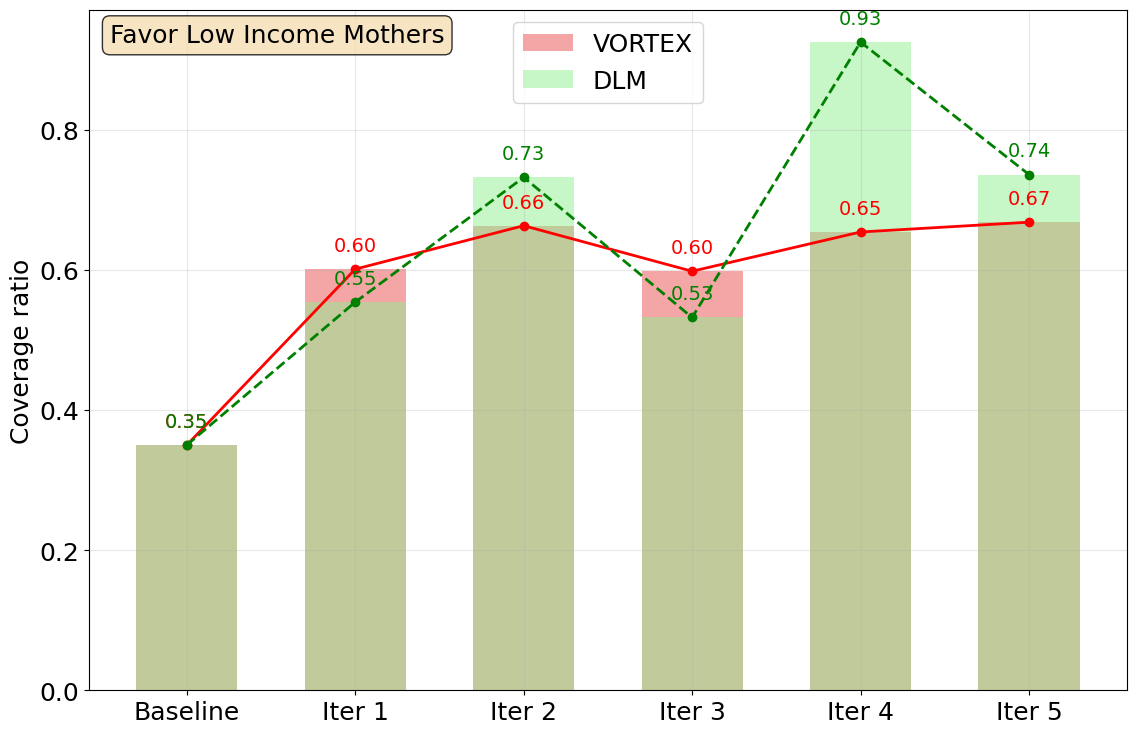}
    \caption{Coverage ratio (favor LI)}
    \label{fig:figure_coverage_compare}
  \end{subfigure}
  \hfill
   \begin{subfigure}[t]{0.445\textwidth} 
    \centering
\includegraphics[width=\textwidth]{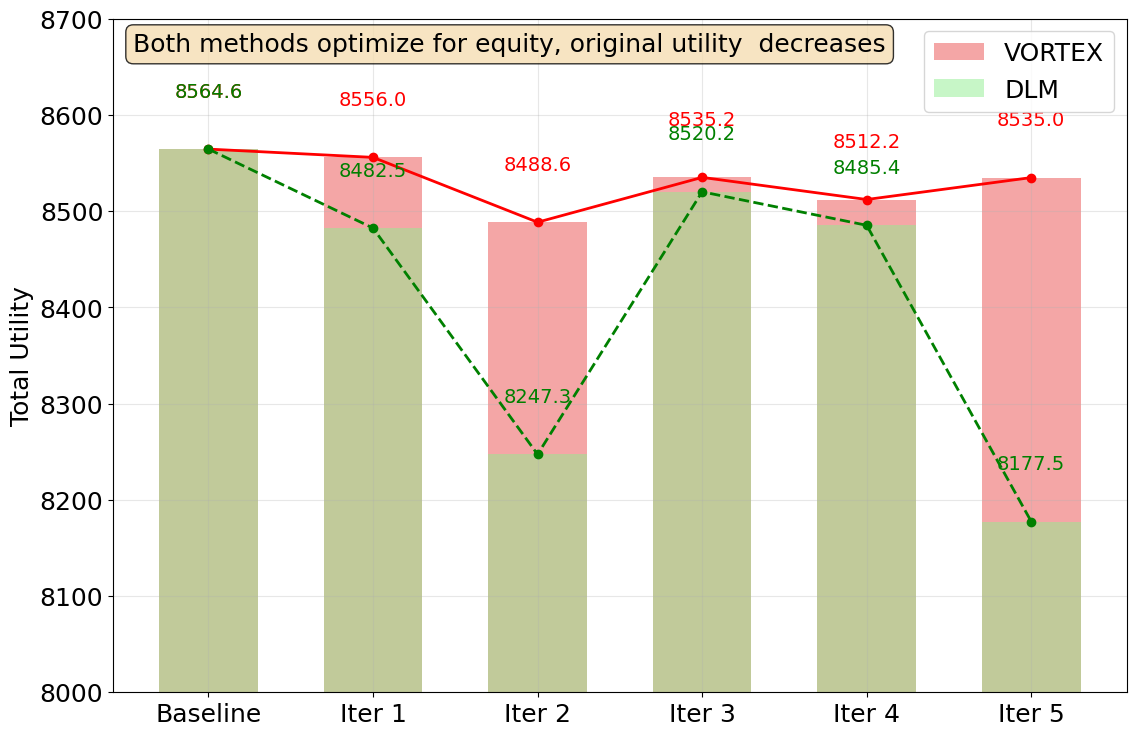}
    \caption{Total utility (favor LE)}
    \label{fig:figure_utility_compare2}
  \end{subfigure}%
  \hfill
  \begin{subfigure}[t]{0.445\textwidth}
    \centering
 \includegraphics[width=\textwidth]{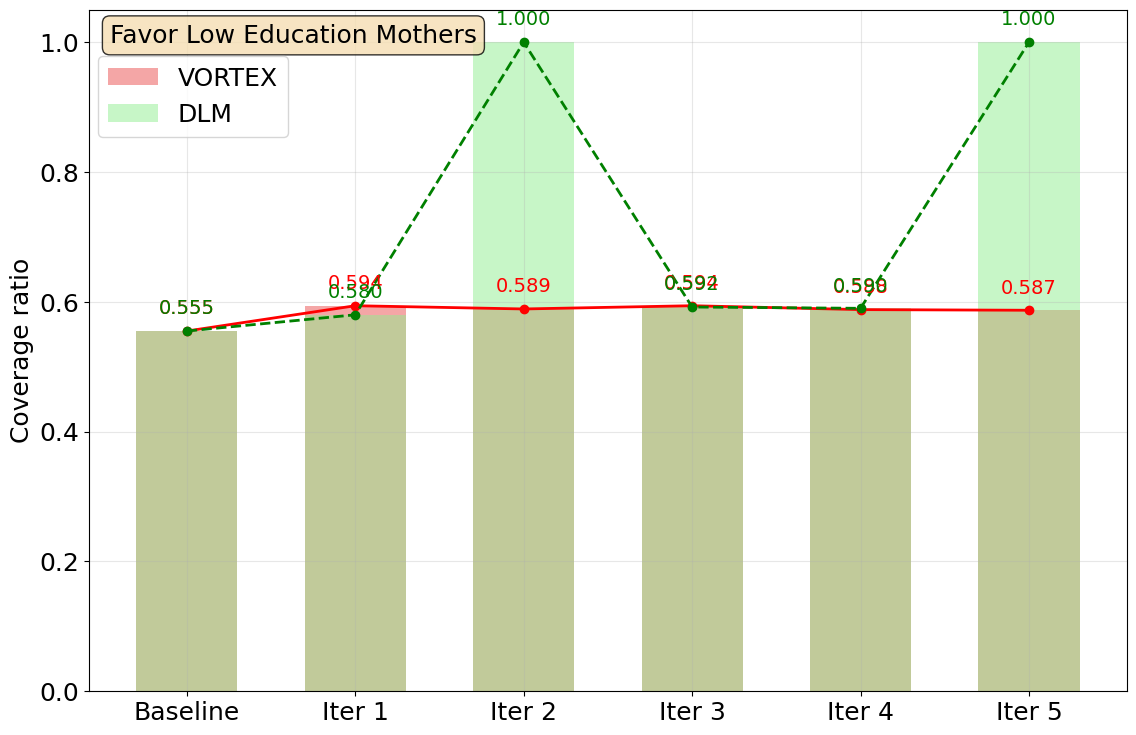}
    \caption{Coverage ratio (favor LE)}
    \label{fig:figure_coverage_compare2}
  \end{subfigure}
  \vspace{-0.3cm}
  \caption{Comparison with DLM for ARMMAN.}
  \label{fig:figure_compare}
  \vspace{-0.6cm}
\end{figure}

\paragraph{Baseline comparison.} Figure \ref{fig:figure_compare}  presents a comparasion between our proposed \texttt{VORTEX}, against the SOTA baseline, DLM. The comparison is conducted over five iterative rounds, evaluating both total utility and the coverage ratio for two specified human preferences ("Favor LI" and "Favor LE"). More experiments are relegated to Appendix C.

Figures \ref{fig:figure_utility_compare} and \ref{fig:figure_utility_compare2} compare the total utility. \texttt{VORTEX} maintains a high and stable utility after a controlled initial drop, while the DLM baseline is highly volatile and suffers a dramatic drop in utility during its run. This highlights \texttt{VORTEX}'s ability to incorporate human preferences without the excessive performance cost and instability exhibited by DLM.
Figures \ref{fig:figure_coverage_compare} and \ref{fig:figure_coverage_compare2} show the coverage ratio for the targeted "low income" group, highlighting the different trade-offs made by each method. \texttt{VORTEX} achieves a steady and stable increase in coverage throughout the iterations. In contrast, DLM's coverage is highly unstable, with erratic fluctuations and a high peak that corresponds to its utility collapse. 
\texttt{VORTEX} demonstrates a more balanced and reliable performance, substantially improving coverage while preserving high utility.

\begin{wrapfigure}{r}{0.65\textwidth}
    \centering
    \vspace{-0.3cm}
\includegraphics[width=0.95\linewidth]{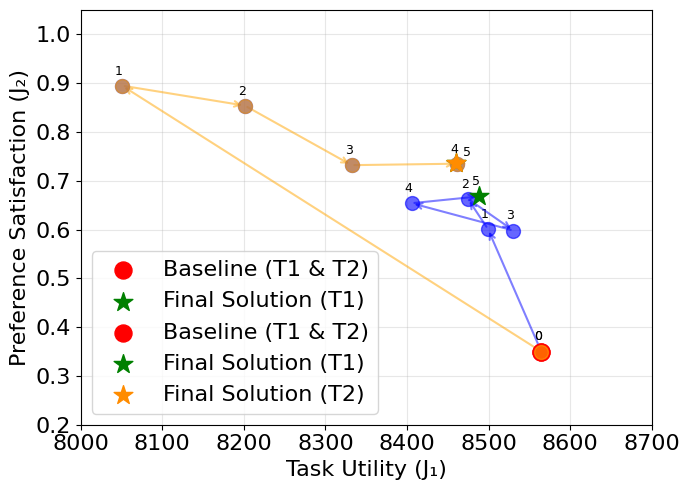}
\vspace{-0.3cm}
    \caption{Pareto navigation.}
    \label{fig:pareto}
    \vspace{-0.5cm}
\end{wrapfigure}

\paragraph{Pareto front navigation.} Figure \ref{fig:pareto}   visualizes how \texttt{VORTEX} navigates the Pareto front, illustrating the trade-off between Task Utility and Preference Satisfaction. It shows two distinct trajectories (T1 and T2) that start from a shared, high-utility baseline. As the iterations progress, both trajectories sacrifice utility to gain preference satisfaction, each exploring a different region of the solution space. Both runs demonstrate clear convergence, settling on different final solutions: T1 favors a higher utility, while T2 prioritizes preference satisfaction. This highlights the framework's ability to converge to stable, well-balanced solutions at different points on the Pareto frontier, catering to varying stakeholder priorities.

\section{Conclusion }
  
We introduce \texttt{VORTEX}, a theoretically-grounded framework that uses an LLM to iteratively shape rewards, enabling a stable, Pareto-optimal balance between task utility and human language preferences without modifying existing solvers. We prove \texttt{VORTEX} converges to Pareto-optimal solutions and show it empirically outperforms baselines by achieving a more reliable and balanced performance.

\bibliographystyle{plainnat}
\bibliography{main}

\newpage
\onecolumn
\appendix

\section{Missing Details}
We provide the prompt example for the Public Health domain in Figure \ref{fig:vortex_prompt} and the associated \texttt{VORTEX} output in Figure \ref{fig:Vortex_output}.

\begin{figure}
    \centering
    \includegraphics[width=0.8\linewidth]{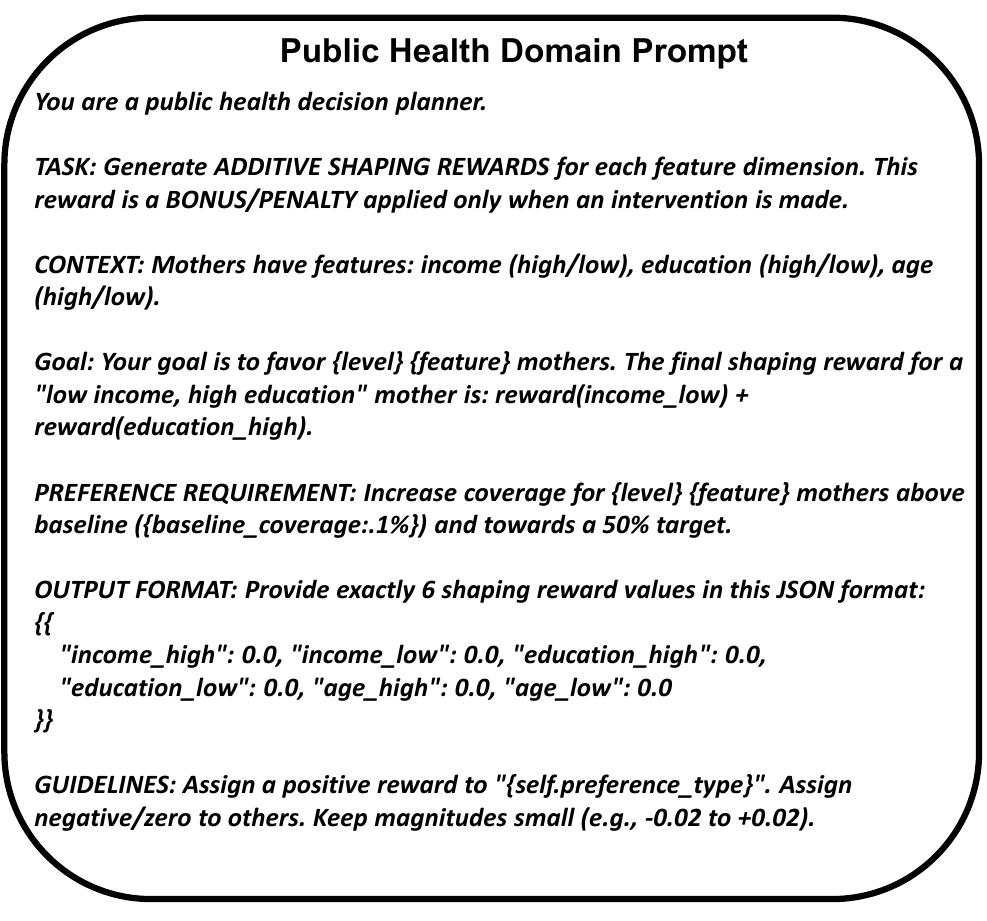}
    \caption{Prompt for VORTEX in public health domain.}
    \label{fig:vortex_prompt}
\end{figure}

\begin{figure}
    \centering
    \includegraphics[width=0.7\linewidth]{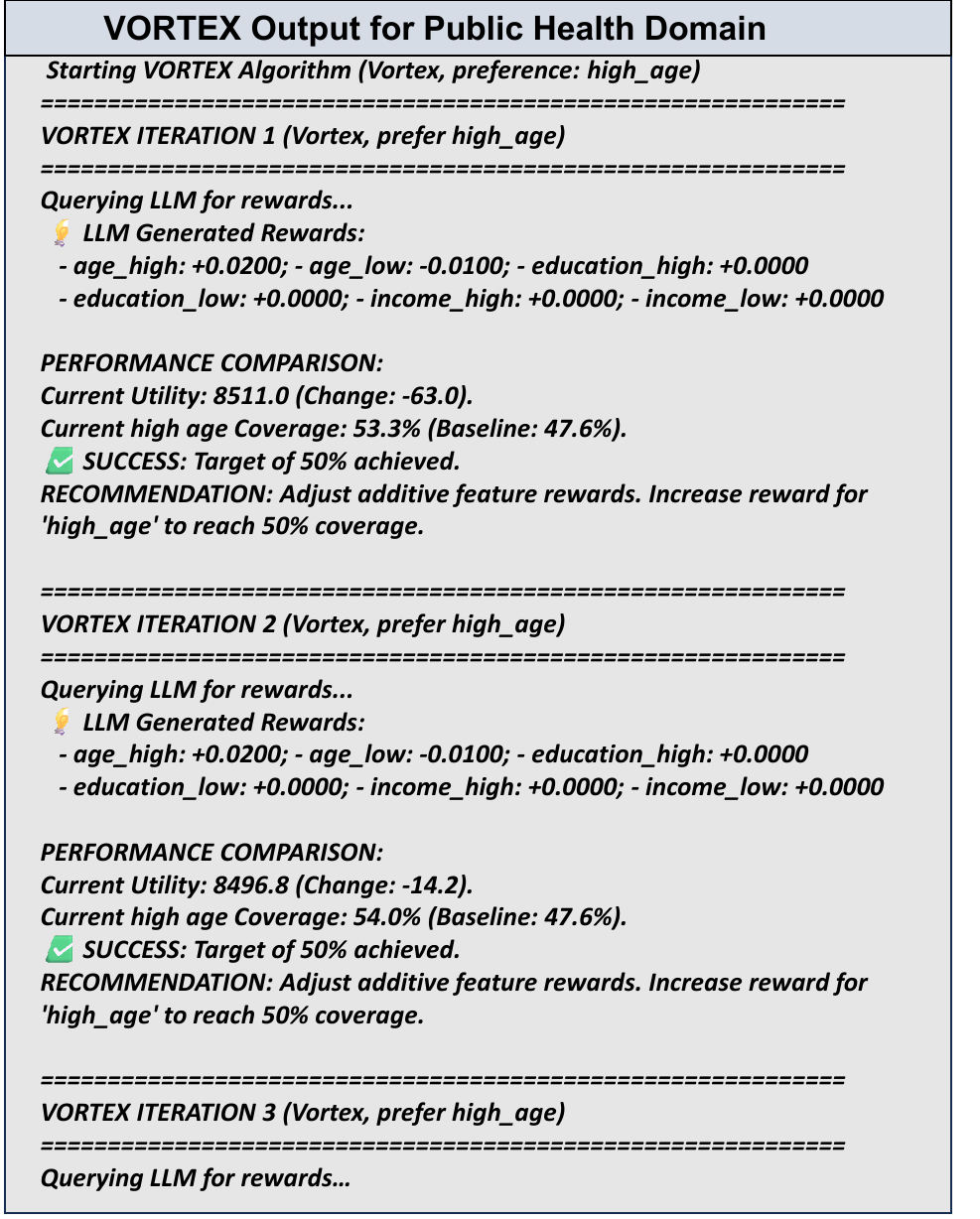}
    \caption{VORTEX output for public health domain}
    \label{fig:Vortex_output}
\end{figure}

\section{Technique Proofs}
In this section, we provide detailed proofs for the theoretical results in the main paper. 
\subsection{Proof of Theorem \ref{thm:scalarization}}
\textbf{Necessity:} Suppose $\pi^*$ is Pareto optimal but not optimal for any scalarized problem. Then for all $\lambda \in [0,1]$, there exists $\pi_{\lambda}$ such that $J_{\lambda}(\pi_{\lambda}) > J_{\lambda}(\pi^*)$. This implies:
\begin{align}
\lambda U(\pi_{\lambda}) - (1-\lambda) C(\pi_{\lambda}) \!>\! \lambda U(\pi^*) - (1-\lambda) C(\pi^*).
\end{align}
By the separating hyperplane theorem, since $\pi^*$ is Pareto optimal, there exists $\lambda^* \geq 0$ such that no policy can simultaneously improve both objectives, contradicting the assumption.

\textbf{Sufficiency:} Suppose $\pi^*$ is optimal for the scalarized problem with weight $\lambda > 0$. If $\pi^*$ is not Pareto optimal, then there exists $\pi'$ such that $U(\pi') \geq U(\pi^*)$ and $-C(\pi') \geq -C(\pi^*)$ with at least one strict inequality. This implies:
\begin{align}
\lambda U(\pi') - (1-\lambda) C(\pi') > \lambda U(\pi^*) - (1-\lambda) C(\pi^*),
\end{align}
contradicting the optimality of $\pi^*$ for the scalarized problem. This completes the proof.

\subsection{Proof of Theorem \ref{thm:mo_to_shaping}}

Starting from the scalarized multi-objective problem in Theorem \ref{thm:scalarization}, we have
\begin{align}\nonumber
J_\lambda(\pi) &= \lambda U(\pi) - (1-\lambda) C(\pi) \\
&= \lambda \mathbb{E}_\pi\left[\sum_{t=1}^T \sum_{i=1}^N R_{\text{base},i}(s_i(t), a_i(t))\right] - (1-\lambda) \text{Div}(D_\pi \| D_{\text{preference}}).
\end{align}
For the $f$-divergence term, we have
\begin{align}
\text{Div}(D_\pi \| D_{\text{preference}}) = \int f\left(\frac{dD_\pi}{dD_{\text{preference}}}\right) dD_{\text{preference}} = \sum_{z \in \mathcal{Z}} f\left(\frac{D_\pi(z)}{D_{\text{preference}}(z)}\right) D_{\text{preference}}(z),
\end{align}
where $D_\pi(z)$ is the empirical feature distribution under policy $\pi$
\begin{align}
D^\pi(z) = \frac{\sum_{t=1}^T \sum_{i=1}^N \mathds{1}(z_i = z) \cdot a_i(t)}{\sum_{t=1}^T \sum_{i=1}^N a_i(t)}.
\end{align}
Taking the partial derivative of the divergence with respect to $D_\pi(z)$
\begin{align}\nonumber
\frac{\partial}{\partial D_\pi(z)} \text{Div}(D_\pi \| D_{\text{preference}}) &= \frac{\partial}{\partial D_\pi(z)} \sum_{z' \in \mathcal{Z}} f\left(\frac{D_\pi(z')}{D_{\text{preference}}(z')}\right) D_{\text{preference}}(z') \\ \nonumber
&= f'\left(\frac{D_\pi(z)}{D_{\text{preference}}(z)}\right) \cdot \frac{1}{D_{\text{preference}}(z)} \cdot D_{\text{preference}}(z) \\
&= f'\left(\frac{D^\pi(z)}{D_{\text{preference}}(z)}\right).
\end{align}
The difference in feature distributions can be written as:
\begin{align}
D_{\pi'}(z) - D_\pi(z) &= \frac{\mathbb{E}_{\pi'}\left[\sum_{t,i: z_i = z} a_i(t)\right]}{\mathbb{E}_{\pi'}\left[\sum_{t,i} a_i(t)\right]} - \frac{\mathbb{E}_{\pi}\left[\sum_{t,i: z_i = z} a_i(t)\right]}{\mathbb{E}_{\pi}\left[\sum_{t,i} a_i(t)\right]}.
\end{align}
Since $\mathbb{E}_{\pi'}[\sum_{t,i} a_i(t)] \approx \mathbb{E}_{\pi}[\sum_{t,i} a_i(t)] = BT$, we have
\begin{align}\nonumber
D_{\pi'}(z) - D_\pi(z) &= \frac{1}{BT} \left(\mathbb{E}_{\pi'}\left[\sum_{t,i: z_i = z} a_i(t)\right] - \mathbb{E}_{\pi}\left[\sum_{t,i: z_i = z} a_i(t)\right]\right) \\
&= \frac{1}{BT} \mathbb{E}_{\pi'}\left[\sum_{t=1}^T \sum_{i: z_i = z} a_i(t)\right] + \text{const}.
\end{align}
Hence, following  first-order Taylor approximation, we have
\begin{align}\nonumber
\text{Div}(D_{\pi'} \| D_{\text{preference}}) &\approx \text{Div}(D_\pi \| D_{\text{preference}}) + \sum_{z} f'\left(\frac{D_\pi(z)}{D_{\text{preference}}(z)}\right) \cdot \frac{1}{BT} \mathbb{E}_{\pi'}\left[\sum_{t=1}^T \sum_{i: z_i = z} a_i(t)\right] + \text{const} \\
&= \text{const} + \frac{1}{BT} \mathbb{E}_{\pi'}\left[\sum_{t=1}^T \sum_{i=1}^N f'\left(\frac{D_\pi(z_i)}{D_{\text{preference}}(z_i)}\right) a_i(t)\right].
\end{align}
Therefore, we have
 \begin{align}\nonumber
J_\lambda(\pi') &\approx \lambda \mathbb{E}_{\pi'}\left[\sum_{t,i} R_{\text{base},i}(s_i(t), a_i(t))\right] - (1-\lambda) \left[\text{const} + \frac{1}{BT} \mathbb{E}_{\pi'}\left[\sum_{t,i} f'\left(\frac{D_\pi(z_i)}{D_{\text{preference}}(z_i)}\right) a_i(t)\right]\right] \\
&= \mathbb{E}_{\pi'}\left[\sum_{t,i} \left(\lambda R_{\text{base},i}(s_i(t), a_i(t)) - \frac{1-\lambda}{BT} f'\left(\frac{D_\pi(z_i)}{D_{\text{preference}}(z_i)}\right) a_i(t)\right)\right] + \text{const}.
\end{align}

Define shaped reward as
\begin{align}
R_h(z_i) = -\frac{1-\lambda}{\lambda BT} f'\left(\frac{D_\pi(z_i)}{D_{\text{preference}}(z_i)}\right).
\end{align}
Then, we have
\begin{align}\nonumber
J_\lambda(\pi') &\propto \mathbb{E}_{\pi'}\left[\sum_{t,i} \left(\lambda R_{\text{base},i}(s_i(t), a_i(t)) + \lambda R_h(z_i) a_i(t)\right)\right] \\ \nonumber
&= \lambda \mathbb{E}_{\pi'}\left[\sum_{t,i} \left(R_{\text{base},i}(s_i(t), a_i(t)) + R_h(z_i) a_i(t)\right)\right] \\
&= \lambda \mathbb{E}_{\pi'}\left[\sum_{t,i} R_{\text{shaped},i}(s_i(t), a_i(t), z_i)\right],
\end{align}
where
\begin{align}
R_{\text{shaped},i}(s, a, z_i) = \begin{cases}
R_{\text{base},i}(s, a) + R_h(z_i) & \text{if } a = 1 \\
R_{\text{base},i}(s, a) & \text{if } a = 0.
\end{cases}
\end{align}
This completes the proof that the scalarized multi-objective problem is equivalent to optimizing a single objective with appropriately shaped rewards.

\subsection{Proof of Theorem \ref{thm:convergence}}

The proof is structured in three main parts:
\begin{enumerate}
    \item \textbf{Gradient Derivation:} We define the scalarized objective function $J_\lambda(R_h)$ and formally establish the existence of its gradient with respect to $R_h$.
    \item \textbf{Stochastic Approximation and ODE Convergence:} We link the discrete, noisy update rule to a continuous-time ODE and show that the iteration asymptotically tracks the ODE's trajectory.
    \item \textbf{Lyapunov Analysis and Pareto Optimality:} We use a Lyapunov function to prove that the ODE's trajectory converges to its set of stationary points, and we then argue that these points correspond to Pareto optimal solutions of the original bi-objective problem.
\end{enumerate}

Given a shaping reward function $R_h$, the solver returns the optimal policy by solving the following problem
\begin{equation}
\pi^*(R_h) = \arg\max_{\pi \in \Pi_{\text{feasible}}} \mathbb{E}_\pi\left[\sum_{t=1}^T \sum_{i=1}^N (R_b(s_i(t), a_i(t)) + R_h(z_i)\right],
\end{equation}
where $\Pi_{\text{feasible}} = \{\pi : \sum_{i=1}^N a_i(t) \leq B, \forall t\}$.
Define the bi-objective function as
\begin{equation}
J(R_h) = (J_1(R_h), J_2(R_h)),
\end{equation}
where
\begin{align}
J_1(R_h) &= \mathbb{E}_{\pi^*(R_h)}\left[\sum_{t,i} R_{base}(s_i(t), a_i(t))\right] \quad \text{(task utility)},\\
J_2(R_h) &= -C(\pi^*(R_h)) \quad \text{(negative preference violation)}.
\end{align}
By Theorem \ref{thm:scalarization}, the Pareto front can be characterized via weighted scalarization:
\begin{equation}
J_\lambda(R_h) = J_1(R_h) + \lambda J_2(R_h), \quad \lambda \geq 0.
\end{equation}

\begin{lemma}
Under Assumptions (A1)-(A3), $J_\lambda(R_h)$ is differentiable with respect to $R_h$.
\end{lemma}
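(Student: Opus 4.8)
The plan is to reduce differentiability of $J_\lambda$ to two independent ingredients—smoothness of the solver map $R_h \mapsto \pi^*(R_h)$ and smoothness of the two functionals evaluated along it—and then compose them by the chain rule. Since $J_\lambda = J_1 + \lambda J_2$ is a finite sum, it suffices to show that each of $J_1$ and $J_2$ is differentiable in $R_h$. My first move is to rewrite both components through the state--action occupancy measure $d^{R_h}(s,a)$ induced by the optimal policy $\pi^*(R_h)$ supplied by Assumption (A1). In this representation the task utility is \emph{linear} in the occupancy, $J_1(R_h) = \sum_{s,a} d^{R_h}(s,a)\, R_{\text{base}}(s,a)$, while the preference term factors as $J_2(R_h) = -\,\text{Div}\!\big(D[d^{R_h}] \,\|\, D_{\text{preference}}\big)$, where $D[\cdot]$ is the fixed linear map sending an occupancy measure to its normalized feature-visitation distribution $D_\pi(z)$.

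Granting this decomposition, the outer smoothness is routine. For $J_1$, linearity in $d^{R_h}$ means its differentiability follows at once from that of the occupancy map. For $J_2$ I would invoke Assumption (A2): the remark following the assumptions records that for the KL and total-variation choices the generator $f$ is $C^1$ on the relevant domain, so $\text{Div}(\cdot \,\|\, D_{\text{preference}})$ is differentiable in its first argument, with partial derivatives $f'\!\big(D_\pi(z)/D_{\text{preference}}(z)\big)$ exactly as computed in the proof of Theorem~\ref{thm:mo_to_shaping}. Composing this with the (to-be-established) smoothness of $R_h \mapsto D[d^{R_h}]$ gives differentiability of $J_2$, and summing yields $\nabla_{R_h} J_\lambda = \nabla_{R_h} J_1 + \lambda \nabla_{R_h} J_2$.

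The crux—and the step I expect to be the main obstacle—is showing that the solver map $R_h \mapsto d^{R_h}$ is differentiable. By Assumption (A1) the solver returns the global maximizer of the budget-constrained shaped value $\mathbb{E}_\pi[\sum_{t,i}(R_{\text{base},i} + R_h(z_i))]$, which is a linear program in the occupancy measure whose cost coefficients are affine in $R_h$. The honest difficulty is that the optimizer of a linear program is only piecewise-constant in its cost vector and is generically non-differentiable at degenerate vertices, so an envelope argument on the shaped value alone is not enough to differentiate $J_1$ and $J_2$ separately. I would dispatch this in one of two equivalent ways: (i) assume the maximizer is unique and the active budget/feasibility constraints satisfy a non-degeneracy condition, so the active set is locally constant and the implicit function theorem applied to the KKT system produces a $C^1$ branch $R_h \mapsto d^{R_h}$ with an explicit Jacobian $\partial d^{R_h}/\partial R_h$; or (ii) adopt the entropy-regularized (soft) solver standard for the restless-bandit backends referenced in the paper, under which $d^{R_h}$ is a smooth softmax-type function of $R_h$ everywhere. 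Either route supplies the missing Jacobian and closes the argument.

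Finally, I would note that differentiability need only hold on the full-measure set where the solver's optimizer is unique, since the stochastic-approximation trajectory analyzed in Theorem~\ref{thm:convergence} almost surely avoids the degenerate set; and that Assumption (A3), which posits a stochastic estimate of $\nabla_{R_h} J_\lambda$, already presupposes this gradient exists—so the present lemma is precisely the well-posedness statement that makes (A3) meaningful.
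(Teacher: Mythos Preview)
Your proposal is correct and shares the paper's high-level strategy---decompose $J_\lambda = J_1 + \lambda J_2$, push differentiability through the solver map via the chain rule, and invoke the implicit function theorem for the sensitivity of $\pi^*(R_h)$---but it differs from the paper in two instructive ways. First, the paper handles $J_1$ by the envelope theorem, writing $\partial J_1/\partial R_h(z) = \mathbb{E}_{\pi^*}[\sum_{t,i}\mathds{1}(z_i=z)a_i(t)]$; your route instead treats $J_1$ as linear in the occupancy measure and composes with the Jacobian of $R_h \mapsto d^{R_h}$. Your argument is actually the sounder one here: the envelope theorem applies to the \emph{shaped} value $\max_\pi \mathbb{E}_\pi[R_{\text{base}}+R_h]$, not to the base-reward component $J_1$ in isolation, so the paper's formula is really $\partial V/\partial R_h$ rather than $\partial J_1/\partial R_h$. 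Second, the paper dispatches the solver-map differentiability in one line (``implicit function theorem on the optimality condition \ldots sensitivity factor''), whereas you confront the genuine obstacle that an LP argmax is only piecewise constant and propose explicit remedies (KKT non-degeneracy or an entropy-regularized solver). What your extra care buys is a proof that actually closes: the paper's sketch assumes away exactly the regularity issue you identify, while your version makes the needed hypotheses explicit and notes that almost-everywhere differentiability suffices for the downstream stochastic-approximation argument.
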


\begin{proof}
By the envelope theorem (Assumption (A1) ensures differentiability):
\begin{align}\nonumber
\frac{\partial J_1(R_h)}{\partial R_h(z)} &= \mathbb{E}_{\pi^*(R_h)}\left[\sum_{t,i} \mathds{1}(z_i = z) a_i(t)\right] = BT \cdot D_{\pi^*(R_h)}(z), \\
\frac{\partial J_2(R_h)}{\partial R_h(z)} &= -\frac{\partial C(\pi^*(R_h))}{\partial R_h(z)}.
\end{align}
From the chain rule and Theorem \ref{thm:mo_to_shaping},
\begin{align}
\frac{\partial C(\pi^*(R_h))}{\partial R_h(z)} = \frac{\partial C}{\partial D_\pi(z)} \cdot \frac{\partial D_{\pi^*(R_h)}(z)}{\partial R_h(z)}.
\end{align}
Using the implicit function theorem on the optimality condition $\nabla_\pi \mathbb{E}_\pi[R_{\text{base}} + R_h] = 0$:
\begin{align}
\frac{\partial D_{\pi^*(R_h)}(z)}{\partial R_h(z)} = \frac{1}{BT} \cdot \text{sensitivity factor}.
\end{align}
Therefore, $J_\lambda(R_h)$ is differentiable.
\end{proof}

\begin{lemma}[Gradient Formula]
The gradient of $J_\lambda$ with respect to $R_h$ is:
\begin{equation}
\nabla_{R_h} J_\lambda(R_h) = d^{\pi^*(R_h)} - \lambda \nabla_{R_h} C(\pi^*(R_h))
\end{equation}
where $d^{\pi^*(R_h)}(z) = BT \cdot D^{\pi^*(R_h)}(z)$ is the expected visitation count.
\end{lemma}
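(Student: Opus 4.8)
The plan is to reduce everything to linearity of the gradient operator. Since $J_\lambda(R_h) = J_1(R_h) + \lambda J_2(R_h)$ is a fixed linear combination of the two components, we have $\nabla_{R_h} J_\lambda = \nabla_{R_h} J_1 + \lambda \nabla_{R_h} J_2$, so the lemma follows as soon as the two term-wise gradients are available. Both are essentially supplied by the preceding differentiability lemma, so this result is largely an assembly step, and the only genuine analysis lives in the $J_1$ term.

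For the $J_1$ term, I would reuse the envelope-theorem computation from the differentiability lemma. Because $\pi^*(R_h)$ maximizes the solver objective $\mathbb{E}_\pi[\sum_{t,i}(R_{base}(s_i(t),a_i(t)) + R_h(z_i)a_i(t))]$ under Assumption (A1), first-order stationarity eliminates the indirect dependence on $R_h$ that flows through the optimizer, leaving only the explicit partial derivative, whose $z$-coordinate is $\mathbb{E}_{\pi^*(R_h)}[\sum_{t,i}\mathds{1}(z_i=z)a_i(t)] = BT\cdot D_{\pi^*(R_h)}(z) = d^{\pi^*(R_h)}(z)$. Stacking coordinates gives $\nabla_{R_h} J_1(R_h) = d^{\pi^*(R_h)}$. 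The $J_2$ term is immediate: from $J_2(R_h)=-C(\pi^*(R_h))$ we get $\nabla_{R_h} J_2(R_h) = -\nabla_{R_h} C(\pi^*(R_h))$, with existence of the gradient guaranteed by the convexity and smoothness of the divergence under Assumption (A2). Substituting both into the linear combination yields $\nabla_{R_h} J_\lambda(R_h) = d^{\pi^*(R_h)} - \lambda \nabla_{R_h} C(\pi^*(R_h))$, as claimed.

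The main obstacle is making the envelope step rigorous in the constrained regime rather than in the textbook unconstrained one. The budget constraint $\sum_i a_i(t)\le B$ turns the feasible set into a polytope, so the clean assertion that ``the indirect term vanishes'' requires either the constrained (KKT) envelope theorem applied on a locally constant active-constraint set, or explicit control of the sensitivity factor $\partial D_{\pi^*(R_h)}(z)/\partial R_h(z)$ already flagged in the differentiability lemma. A related subtlety is that $J_1$ measures only the base utility $U(\pi^*(R_h))$, not the full shaped objective the solver maximizes, so the two differ by the term $\nabla_{R_h}\langle R_h, d^{\pi^*(R_h)}\rangle$; I would resolve this exactly as in the proof of Theorem~\ref{thm:mo_to_shaping}, invoking the first-order Taylor and visitation-count approximation ($\mathbb{E}_{\pi}[\sum_{t,i}a_i(t)]\approx BT$) under which the policy-induced correction is of lower order, so that the stated visitation-count gradient holds to the accuracy needed by the downstream stochastic-approximation and ODE argument.
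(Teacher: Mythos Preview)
Your proposal is correct and follows essentially the same approach as the paper: the paper's own proof is the one-liner $\nabla_{R_h} J_\lambda = \nabla_{R_h} J_1 + \lambda \nabla_{R_h} J_2 = d^{\pi^*(R_h)} - \lambda \nabla_{R_h} C(\pi^*(R_h))$, i.e., linearity of the gradient together with the component formulas already supplied by the preceding differentiability lemma. Your write-up is actually more careful than the paper's, since you flag the subtlety that $J_1$ is the base utility rather than the full solver objective (so the envelope theorem does not apply to it directly) and propose to absorb the discrepancy via the same first-order approximations used in Theorem~\ref{thm:mo_to_shaping}; the paper simply asserts the envelope-theorem formula for $\partial J_1/\partial R_h(z)$ without addressing this point.
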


\begin{proof}
It follows
\begin{align}
\nabla_{R_h} J_\lambda(R_h) &= \nabla_{R_h} J_1(R_h) + \lambda \nabla_{R_h} J_2(R_h) = d^{\pi^*(R_h)} - \lambda \nabla_{R_h} C(\pi^*(R_h)).
\end{align}

\end{proof}

The shaping reward function $R_h$ is updated according to the following stochastic approximation (SA) rule:
\begin{equation}
    R_h^{k+1} = R_h^k + \eta_k g^k,
\end{equation}
where $\eta_k$ is the step-size, satisfying the standard SA conditions: $\sum_k \eta_k = \infty$ and $\sum_k \eta_k^2 < \infty$; $g^k = \hat{\nabla}_{R_h} J_\lambda(R_h^k)$ is the stochastic gradient estimate obtained at step $k$. 
By Assumption (A3), this feedback $\text{Feedback}^k = f(\delta_U, \delta_D)$ provides a stochastic estimate
\begin{align}
g^k = \nabla_{R_h} J_\lambda(R_h^k) + \xi^k,
\end{align}
where $\mathbb{E}[\xi^k | \mathcal{F}_{k-1}] = 0$ and $\mathbb{E}[\|\xi^k\|^2] \leq \sigma^2 < \infty$.
The core principle of SA theory is that the long-term behavior of the discrete, noisy iteration is described by a deterministic ODE.

\begin{lemma}[ODE Approximation]
Under the standard assumptions on the step-sizes $\{\eta_k\}$ and the noise process $\{\xi^k\}$, the sequence $\{R_h^k\}$ generated by the update rule converges almost surely to the solution set of the following ODE:
\begin{equation} \label{eq:ode_english}
    \frac{d R_h(t)}{dt} = \nabla_{R_h} J_\lambda(R_h(t)).
\end{equation}
\end{lemma}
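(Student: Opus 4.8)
The plan is to prove this via the classical ODE method for stochastic approximation (the Kushner--Clark / Borkar framework), since the update $R_h^{k+1} = R_h^k + \eta_k g^k$ with martingale-difference noise $g^k = \nabla_{R_h} J_\lambda(R_h^k) + \xi^k$ falls exactly within its scope. First I would pass to continuous time: define the cumulative time stamps $t_k = \sum_{j=0}^{k-1}\eta_j$ (so that $t_k \to \infty$ because $\sum_k \eta_k = \infty$) and build the piecewise-linear interpolation $\bar R_h(\cdot)$ with $\bar R_h(t_k) = R_h^k$. The objective is then to show that the tails of this interpolated trajectory track the flow of $\dot R_h = \nabla_{R_h} J_\lambda(R_h)$ arbitrarily well on every finite horizon, which yields almost-sure convergence of $\{R_h^k\}$ to the internally chain-transitive invariant sets of the ODE, hence to its solution set as claimed in \eqref{eq:ode_english}.

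The argument rests on three ingredients I would establish in order. (i) \emph{Noise vanishing}: setting $M_k = \sum_{j=0}^{k-1}\eta_j \xi^j$, the martingale-difference property $\mathbb{E}[\xi^k \mid \mathcal{F}_{k-1}] = 0$ together with $\mathbb{E}\|\xi^k\|^2 \le \sigma^2$ and $\sum_k \eta_k^2 < \infty$ gives $\sum_k \eta_k^2 \mathbb{E}\|\xi^k\|^2 < \infty$, so $M_k$ is an $L^2$-bounded martingale and converges almost surely by the martingale convergence theorem; consequently the accumulated noise over any window $[t_k, t_k + T]$ tends to zero almost surely. (ii) \emph{Regularity of the vector field}: from the differentiability established in the gradient lemma, $\nabla_{R_h} J_\lambda$ is locally Lipschitz, which guarantees existence, uniqueness, and continuous dependence on initial conditions for the ODE solutions. (iii) \emph{Stability}: I would verify $\sup_k \|R_h^k\| < \infty$ almost surely, exploiting the problem structure---the visitation term $d^{\pi^*(R_h)}$ is bounded by $BT$ and the divergence gradient is bounded under (A2), so the drift is uniformly bounded---or, failing a direct bound, invoking the Borkar--Meyn stability criterion on the scaled vector field.

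With these in hand, the concluding step is routine. Boundedness together with the bounded drift make the family of time-shifted interpolations $\{\bar R_h(t_k + \cdot)\}_k$ equicontinuous and uniformly bounded, so Arzel\`a--Ascoli extracts subsequential limits that converge uniformly on compact sets. Writing the iteration in integral form $\bar R_h(t_k+t) = R_h^k + \int_0^t \nabla_{R_h} J_\lambda(\bar R_h(t_k+s))\,ds + (\text{noise} + \text{interpolation error})$ and driving the error terms to zero via (i) and the continuity from (ii), every such limit solves the ODE exactly; the Kushner--Clark lemma then identifies the asymptotic behavior of $\{R_h^k\}$ with a trajectory of \eqref{eq:ode_english}. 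I expect the \textbf{stability argument (iii)} to be the main obstacle: the generic ODE method presumes almost-surely bounded iterates, and certifying this rigorously for the LLM-driven update---rather than simply assuming it---requires either carefully exploiting the uniform boundedness of the gradient terms or supplying a separate Lyapunov/Borkar--Meyn condition, which is where the substantive work lies.
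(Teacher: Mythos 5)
Your proposal is correct and takes essentially the same route as the paper: both invoke the classical Kushner--Clark/Borkar ODE method for stochastic approximation, with the interpolated process, martingale noise analysis, tightness/Arzel\`a--Ascoli, and identification of limit points with ODE trajectories. In fact the paper's proof is only a citation plus a three-step outline (tightness, limit-point characterization, Lyapunov convergence), so your version supplies the standard details the paper omits, and your observation that almost-sure boundedness of the iterates $\sup_k \|R_h^k\| < \infty$ must be separately certified (via bounded drift or a Borkar--Meyn condition) correctly identifies a hypothesis the paper's sketch leaves implicit.
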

\begin{proof}
This is a classical result from the ODE method for analyzing stochastic approximation algorithms \cite{kushner2012stochastic, borkar2008stochastic}. The proof involves showing that the sequence of interpolated processes is tight, and that any of its limit points must satisfy the ODE.  The key steps are:
\begin{enumerate}
\item \emph{Tightness:} The family $\{\bar{R}_h^n(\cdot)\}$ is tight in $C([0,\infty), \mathbb{R}^d)$;
\item \emph{Characterization of limit points:} Any limit point satisfies the ODE;
\item \emph{Convergence:} By Lyapunov analysis, trajectories converge to the set of stationary points.
\end{enumerate}
\end{proof}

\paragraph{Convergence to Stationary Points.}
We can use the objective function $J_\lambda(R_h)$ itself as a Lyapunov function to analyze the stability of the ODE.

\begin{lemma}[Convergence]
The function $J_\lambda(R_h)$ is a Lyapunov function for the dynamics in \eqref{eq:ode_english}. Consequently, the trajectory $R_h(t)$ converges to the set of stationary points $S = \{R_h : \nabla_{R_h} J_\lambda(R_h) = 0\}$.
\end{lemma}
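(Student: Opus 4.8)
The plan is to establish the standard Lyapunov decrease property for the gradient-ascent ODE and then invoke LaSalle's invariance principle. First I would compute the time derivative of the candidate Lyapunov function $V(R_h) := J_\lambda(R_h)$ along trajectories of \eqref{eq:ode_english}. By the chain rule,
\begin{align*}
\frac{d}{dt} J_\lambda(R_h(t)) = \big\langle \nabla_{R_h} J_\lambda(R_h(t)), \tfrac{d R_h(t)}{dt} \big\rangle = \big\| \nabla_{R_h} J_\lambda(R_h(t)) \big\|^2 \ge 0,
\end{align*}
using the ODE itself to substitute $\tfrac{dR_h}{dt} = \nabla_{R_h} J_\lambda(R_h)$. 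Since we are maximizing, $J_\lambda$ is nondecreasing along trajectories, and the derivative vanishes exactly on the stationary set $S$. Thus $-J_\lambda$ serves as a Lyapunov function in the usual sign convention, establishing that $J_\lambda$ is monotone and acts as a certificate of stability for the ascent flow.

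Next I would invoke LaSalle's invariance principle (or equivalently the convergence half of the ODE method, as referenced in the preceding lemma). The key step is to argue that trajectories remain in a compact set so that the $\omega$-limit set is nonempty, compact, and invariant; here one uses that $J_\lambda$ is bounded above — which follows because $J_1(R_h) = \mathbb{E}[\sum R_{\text{base}}]$ is bounded over the feasible policy class (finite horizon $T$, finite population $N$, bounded base rewards) and $J_2 = -C \le 0$, so the monotone increasing quantity $J_\lambda(R_h(t))$ converges to a finite limit. Boundedness of $J_\lambda$ combined with coercivity or an a priori bound on the iterates confines the trajectory, and LaSalle then forces the trajectory into the largest invariant set contained in $\{R_h : \tfrac{d}{dt} J_\lambda = 0\} = \{R_h : \nabla_{R_h} J_\lambda(R_h) = 0\} = S$. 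Because $S$ is precisely the zero set of the gradient, any invariant subset on which the flow is stationary must lie in $S$, yielding convergence of $R_h(t)$ to $S$.

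The main obstacle I expect is the boundedness/compactness issue rather than the elementary chain-rule computation. The Lyapunov inequality is immediate once differentiability of $J_\lambda$ is granted (already secured by the preceding differentiability lemma), but rigorously confining the ODE trajectory to a compact set requires either an explicit a priori bound on $\|R_h(t)\|$ or an argument that the sublevel/superlevel sets of $J_\lambda$ are bounded. Since the shaping reward $R_h$ is unconstrained a priori and the gradient $\nabla_{R_h} J_\lambda = d^{\pi^*(R_h)} - \lambda \nabla_{R_h} C$ may not be coercive (the visitation-count term $d^{\pi^*(R_h)}$ is bounded by $BT$ but need not drive $R_h$ back toward the origin), one may need a mild regularization or a projection onto a compact reward set, or alternatively an appeal to Assumption (A2) convexity of $C$ to guarantee that $\nabla_{R_h} C$ eventually dominates and keeps iterates bounded. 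I would therefore flag that the clean statement of convergence to $S$ implicitly relies on such a stability/boundedness condition, and I would state it as the one nontrivial hypothesis needed to close the LaSalle argument; the remaining identification of $S$ with the stationary points and the subsequent link to Pareto optimality (via Theorem~\ref{thm:scalarization}) are then routine.
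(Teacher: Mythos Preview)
Your proposal is correct and follows essentially the same approach as the paper: compute $\tfrac{d}{dt}J_\lambda(R_h(t)) = \|\nabla_{R_h} J_\lambda(R_h(t))\|^2 \ge 0$ via the chain rule and the ODE, then invoke LaSalle's invariance principle to conclude convergence to the zero-gradient set $S$. Your additional discussion of the boundedness/compactness hypothesis needed to apply LaSalle rigorously is a valid concern that the paper simply leaves implicit.
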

\begin{proof}
We examine the time derivative of $J_\lambda(R_h(t))$ along the trajectories of the ODE:
$$ \frac{d}{dt} J_\lambda(R_h(t)) = \left\langle \nabla_{R_h} J_\lambda(R_h(t)), \frac{dR_h(t)}{dt} \right\rangle. $$
By the definition of the ODE, $\frac{dR_h(t)}{dt} = \nabla_{R_h} J_\lambda(R_h(t))$, which gives:
$$ \frac{d}{dt} J_\lambda(R_h(t)) = \left\langle \nabla_{R_h} J_\lambda(R_h), \nabla_{R_h} J_\lambda(R_h) \right\rangle = \|\nabla_{R_h} J_\lambda(R_h)\|^2 \geq 0. $$
This shows that $J_\lambda(R_h(t))$ is monotonically non-decreasing along the trajectories. By LaSalle's Invariance Principle, as $t \to \infty$, the trajectory $R_h(t)$ must converge to the largest invariant set where the time derivative is zero, which is precisely the set of points where $\nabla_{R_h} J_\lambda(R_h) = 0$.
\end{proof}

\paragraph{Characterization of Stationary Points.}
\begin{lemma}
A point $R_h^*$ is stationary for $J_\lambda$ if and only if it satisfies:
\begin{align}
\nabla_{R_h} J_\lambda(R_h^*) = 0 \Leftrightarrow \lambda \nabla_{R_h} J_1(R_h^*) + (1-\lambda) \nabla_{R_h} J_2(R_h^*) = 0.
\end{align}
\end{lemma}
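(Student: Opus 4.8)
The plan is to reduce the claimed equivalence to the linearity of the gradient operator, using the differentiability already secured by the preceding lemma. First I would recall that the scalarized objective is, by construction, the fixed affine combination $J_\lambda(R_h) = \lambda J_1(R_h) + (1-\lambda) J_2(R_h)$ inherited from Theorem~\ref{thm:scalarization} (with $J_1 = U$ and $J_2 = -C$). Since the earlier differentiability lemma establishes that both $J_1$ and $J_2$ are differentiable in $R_h$ under Assumptions (A1)--(A2) --- the former via the envelope theorem applied to the solver's optimality condition, the latter via the smoothness of the $f$-divergence --- the gradient of their affine combination exists and distributes across the sum.

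Next I would apply linearity: the identity $\nabla_{R_h} J_\lambda(R_h) = \lambda \nabla_{R_h} J_1(R_h) + (1-\lambda)\nabla_{R_h} J_2(R_h)$ holds as functions of $R_h$. Evaluating both sides at a candidate point $R_h^*$ shows that the left-hand vanishing condition and the right-hand weighted-gradient condition are literally the same vector equation, so the ``if and only if'' follows immediately in both directions. No separating-hyperplane or optimality argument is needed at this stage; the content is purely the first-order rewriting of the stationarity condition, which is why this is the easiest of the lemmas in the chain.

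The only point requiring care --- and the closest thing to an obstacle --- is notational consistency: the convergence-analysis subsection momentarily writes the scalarization as $J_\lambda = J_1 + \lambda J_2$, whereas the lemma and Theorem~\ref{thm:scalarization} use the normalized form $\lambda J_1 + (1-\lambda) J_2$. I would therefore fix the convention to the normalized form up front, so that the weights $(\lambda, 1-\lambda)$ appearing in the statement are exactly the coefficients being differentiated; under the unnormalized convention the same argument instead yields $\nabla J_1 + \lambda \nabla J_2 = 0$, and a trivial rescaling reconciles the two. Finally, I would remark that this characterization is precisely the Pareto-stationarity condition for the bi-objective problem --- a nonnegative convex combination of the objective gradients vanishing --- so that, combined with Theorem~\ref{thm:scalarization}, it certifies that the stationary set reached by the limiting ODE consists of Pareto-optimal solutions, closing the final step of the convergence argument.
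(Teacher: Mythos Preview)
Your proposal is correct and matches the paper's treatment: the paper gives no separate proof for this lemma, merely stating it and remarking that it ``is exactly the first-order necessary condition for Pareto optimality in the scalarized problem,'' which is precisely your linearity-of-the-gradient argument. You are also right to flag the notational inconsistency between the $J_1 + \lambda J_2$ form used earlier in the convergence subsection and the $\lambda J_1 + (1-\lambda) J_2$ form in the lemma statement; the paper leaves this unreconciled, so your explicit fix is an improvement.
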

This is exactly the first-order necessary condition for Pareto optimality in the scalarized problem.

\paragraph{Stationary Points and Pareto Optimality.}
The final step is to connect the stationary points of the ODE to the desired notion of optimality for the original problem.

\begin{lemma}[Optimality]
Assuming the bi-objective problem is convex, if $R_h^*$ is a stationary point (i.e., $\nabla_{R_h} J_\lambda(R_h^*) = 0$ for some $\lambda > 0$), then the corresponding objective vector $(J_1(R_h^*), J_2(R_h^*))$ is a Pareto optimal solution.
\end{lemma}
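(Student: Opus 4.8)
The plan is to establish the lemma in two stages: first use the convexity hypothesis to upgrade the first-order stationarity condition into global optimality of the scalarized objective, and then invoke the standard weighted-scalarization argument to convert that global optimality into Pareto optimality. The second stage is essentially a rerun of the sufficiency direction of Theorem~\ref{thm:scalarization}, so the real content lives in the first stage.

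First I would record that, under the stated convexity assumption, the scalarized objective $J_\lambda = J_1 + \lambda J_2$ is concave in $R_h$: each objective being maximized is concave (equivalently, the preference violation $C$ is convex by Assumption (A2) and the utility term is concave), and a nonnegative combination with $\lambda > 0$ preserves concavity. For a differentiable concave function, a point with vanishing gradient is a global maximizer; hence $\nabla_{R_h} J_\lambda(R_h^*) = 0$ yields $J_\lambda(R_h^*) \ge J_\lambda(R_h)$ for every feasible $R_h$. The differentiability needed here is already supplied by the earlier differentiability lemma under Assumptions (A1)--(A3), so no new regularity argument is required.

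Second, I would argue by contradiction. Suppose the objective vector $(J_1(R_h^*), J_2(R_h^*))$ is not Pareto optimal. Then there is a feasible $R_h'$ with $J_1(R_h') \ge J_1(R_h^*)$ and $J_2(R_h') \ge J_2(R_h^*)$, at least one inequality strict. Multiplying the second inequality by $\lambda > 0$ and adding the two gives $J_\lambda(R_h') > J_\lambda(R_h^*)$, contradicting the global optimality established in the first stage. Hence $R_h^*$ is Pareto optimal, and unwinding the definitions of $J_1$ and $J_2$ places the resulting objective vector in the Pareto front $\mathcal{P}$, as required.

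The main obstacle is precisely the passage from stationarity to global optimality, since it rests entirely on the convexity hypothesis, which is subtle in this reward-shaping setting: $J_1$ and $J_2$ are defined through the solver's $\arg\max$ map $R_h \mapsto \pi^*(R_h)$, and this composition need not be concave in $R_h$ even when $C(\pi)$ is convex in the feature distribution under (A2). I would therefore be careful to pin down the space in which convexity is assumed—most naturally treating $J_1, J_2$ as functions of the induced visitation distribution $D_{\pi^*(R_h)}$ rather than of $R_h$ directly—and to flag that absent genuine concavity the conclusion degrades to first-order (local) Pareto stationarity rather than global Pareto optimality. Once the global-maximizer step is secured, the remaining contradiction argument is routine given Theorem~\ref{thm:scalarization}.
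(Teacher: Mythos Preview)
Your proposal is correct and matches the paper's approach: both upgrade stationarity to optimality of $J_\lambda$ and then run the standard contradiction argument from weighted scalarization to obtain Pareto optimality. If anything, you are more careful than the paper, which writes that $R_h^*$ is a ``(local) maximizer'' yet immediately uses it as a global maximizer in the contradiction; your explicit invocation of concavity to bridge that gap, and your caveat that without it one only gets local Pareto stationarity, are well placed.
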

\begin{proof}
This is a standard result for the weighted scalarization method. Since $\nabla_{R_h} J_\lambda(R_h^*) = 0$, $R_h^*$ is a (local) maximizer of the scalarized function $J_\lambda$.
Assume for contradiction that the solution corresponding to $R_h^*$ is not Pareto optimal. Then there must exist another shaping reward $\tilde{R}_h$ such that $J_1(\tilde{R}_h) \geq J_1(R_h^*)$ and $J_2(\tilde{R}_h) \geq J_2(R_h^*)$, with at least one inequality being strict.
For any $\lambda > 0$, this would imply:
$$ J_\lambda(\tilde{R}_h) = J_1(\tilde{R}_h) + \lambda J_2(\tilde{R}_h) > J_1(R_h^*) + \lambda J_2(R_h^*) = J_\lambda(R_h^*). $$
This contradicts the fact that $R_h^*$ is a maximizer of $J_\lambda$. Therefore, the solution must be Pareto optimal.
\end{proof}
Therefore, we have
\begin{align}
\lim_{k \to \infty} (U(\pi^*(R_h^k)), -C(\pi^*(R_h^k))) = (J_1(R_h^*), J_2(R_h^*)) \in \text{Pareto frontier}
\end{align}
This completes the proof.

\section{Additional Results}

All experiments are run on a machine with 16GB RAM and M1 chip CPU. The primary bottleneck in speed was the API call limits
for using LLM (Gemini Pro). For each experiment, we run 50 independent trials.

\subsection{Public Health Domain (ARMMAN)}
A public health program managing prenatal care adherence through targeted phone interventions \cite{mate2022field,behari2024decision}.

\paragraph{Environment abstract.}  
We simulate a population of 800 mothers, partitioned into 8 demographic types (each with 100) based on three binary features: \textbf{Income:} Low / High;
 \textbf{Education:} Low / High;
\textbf{Age:} Young  / Old.
The state of each mother is binary: $s_i(t) \in \{0, 1\}$, where $s_i(t) = 0$ indicates disengagement with health services (high risk), and $s_i(t) = 1$ indicates engagement (low risk). 
At each round, the planner is allowed to intervene with up to $B = 400$ mothers (i.e., $50\%$ of the total population). Each mother evolves as a controlled MDP associated with a transition kernel and base reward.

\begin{table}[h]
\centering
\caption{System Parameters}
\begin{tabular}{@{}ll@{}}
\toprule
Parameter & Value \\
\midrule
Number of patients ($N$) & 800 pregnant women \\
Budget constraint ($B$) & 400 calls per round \\
Time horizon ($T$) & 50 weeks (pregnancy duration) \\
State space & $\mathcal{S} = \{0, 1\}$ \\
& $s = 0$: Non-adherent (high risk) \\
& $s = 1$: Adherent (low risk) \\
\bottomrule
\end{tabular}
\end{table}

\textbf{Patient Features} $z_i = (\text{age}_i, \text{education}_i, \text{income}_i$):
\begin{itemize}
    \item Age: $\{1, 2\}$ (Young, Old)
    \item Education level: $\{1, 2\}$ (Low, High)
    \item Income level: $\{1, 2\}$ (Low, High)
\end{itemize}

\paragraph{Base reward function.} 
$R_b(s=0) =0.2, R_b(s=1) =0.8.$

\subsubsection{Transitions} We list the 8 types of transition below

Type 1: High Income High Education High Age

  From State 0 with No Call:
    → Stay in State 0: 0.900
    → Move to State 1: 0.100
    
  From State 0 with Call:
    → Stay in State 0: 0.350
    → Move to State 1: 0.650

  From State 1 with No Call:
    → Stay in State 1: 0.500
    → Move to State 0: 0.500

  From State 1 with Call:
    → Stay in State 1: 0.950
    → Move to State 0: 0.050

Type 2: High Income High Education Low Age

  From State 0 with No Call:
    → Stay in State 0: 0.900
    → Move to State 1: 0.100

  From State 0 with Call:
    → Stay in State 0: 0.250
    → Move to State 1: 0.750
  
  From State 1 with No Call:
    → Stay in State 1: 0.400
    → Move to State 0: 0.600
  
  From State 1 with Call:
    → Stay in State 1: 0.950
    → Move to State 0: 0.050

Type 3: High Income Low Education High Age

  From State 0 with No Call:
    → Stay in State 0: 0.900
    → Move to State 1: 0.100
  
  From State 0 with Call:
    → Stay in State 0: 0.500
    → Move to State 1: 0.500
  
  From State 1 with No Call:
    → Stay in State 1: 0.500
    → Move to State 0: 0.500
  
  From State 1 with Call:
    → Stay in State 1: 0.900
    → Move to State 0: 0.100

Type 4: High Income Low Education Low Age

  From State 0 with No Call:
    → Stay in State 0: 0.900
    → Move to State 1: 0.100
  
  From State 0 with Call:
    → Stay in State 0: 0.400
    → Move to State 1: 0.600
  
  From State 1 with No Call:
    → Stay in State 1: 0.400
    → Move to State 0: 0.600
  
  From State 1 with Call:
    → Stay in State 1: 0.900
    → Move to State 0: 0.100

Type 5: Low Income High Education High Age

  From State 0 with No Call:
    → Stay in State 0: 0.800
    → Move to State 1: 0.200
  
  From State 0 with Call:
    → Stay in State 0: 0.250
    → Move to State 1: 0.750
  
  From State 1 with No Call:
    → Stay in State 1: 0.400
    → Move to State 0: 0.600
  
  From State 1 with Call:
    → Stay in State 1: 0.900
    → Move to State 0: 0.100

Type 6: Low Income High Education Low Age

  From State 0 with No Call:
    → Stay in State 0: 0.800
    → Move to State 1: 0.200

  From State 0 with Call:
    → Stay in State 0: 0.150
    → Move to State 1: 0.850
  
  From State 1 with No Call:
    → Stay in State 1: 0.300
    → Move to State 0: 0.700
  
  From State 1 with Call:
    → Stay in State 1: 0.900
    → Move to State 0: 0.100

Type 7: Low Income Low Education High Age

  From State 0 with No Call:
    → Stay in State 0: 0.800
    → Move to State 1: 0.200

  From State 0 with Call:
    → Stay in State 0: 0.400
    → Move to State 1: 0.600
  
  From State 1 with No Call:
    → Stay in State 1: 0.400
    → Move to State 0: 0.600
  
  From State 1 with Call:
    → Stay in State 1: 0.800
    → Move to State 0: 0.200

Type 8: Low Income Low Education Low Age

  From State 0 with No Call:
    → Stay in State 0: 0.800
    → Move to State 1: 0.200

  From State 0 with Call:
    → Stay in State 0: 0.300
    → Move to State 1: 0.700
  
  From State 1 with No Call:
    → Stay in State 1: 0.300
    → Move to State 0: 0.700
  
  From State 1 with Call:
    → Stay in State 1: 0.800
    → Move to State 0: 0.200

\begin{figure}
    \centering
    \includegraphics[width=0.85\linewidth]{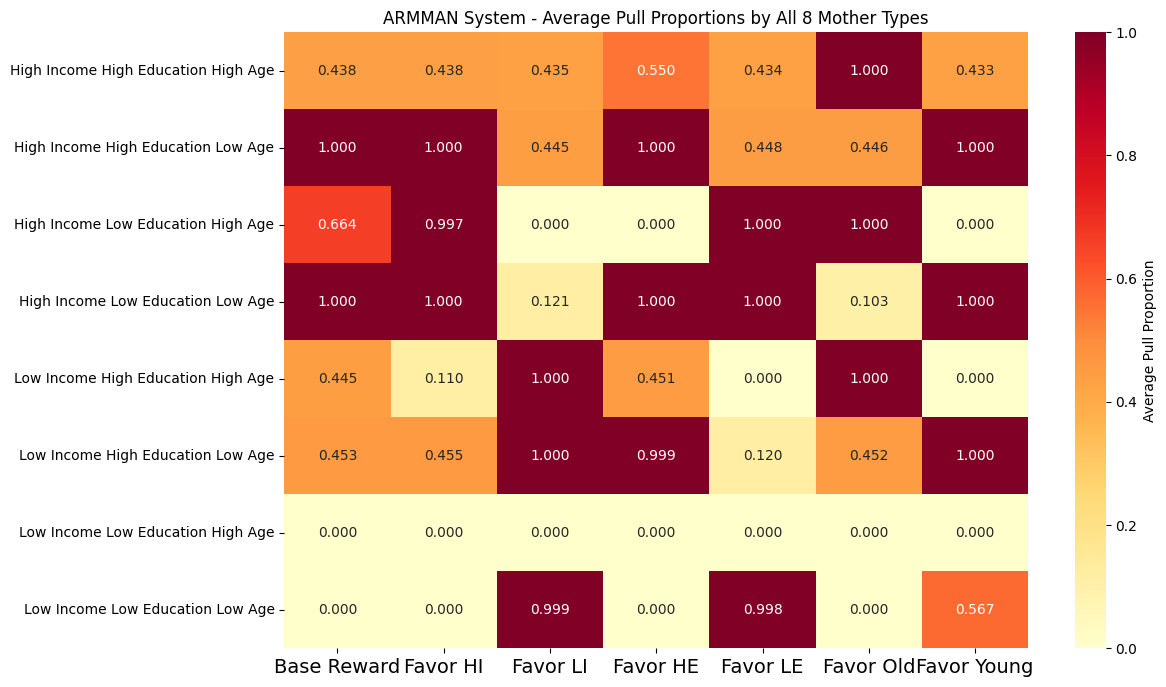}
    \caption{Heatmap for arm-pull proportions under each preference requirement for the public health domain.}
    \label{fig:heatmap_armman}
\end{figure}

\begin{figure}
    \centering
    \includegraphics[width=0.97\linewidth]{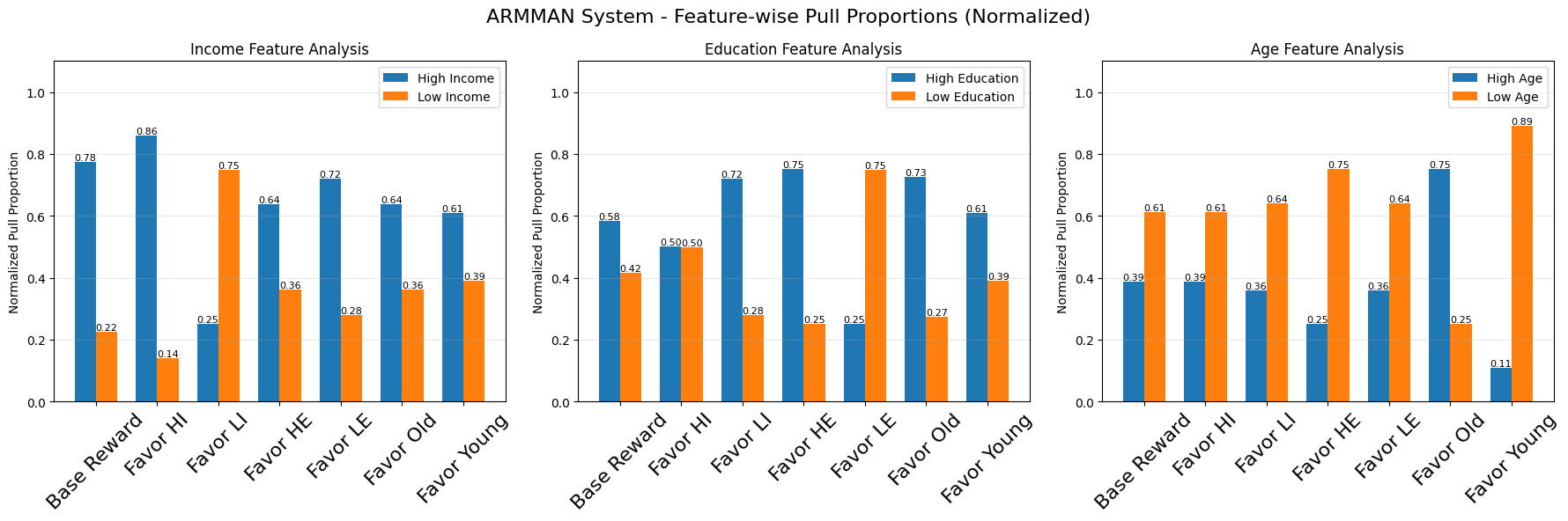}
    \caption{Feature-wise pull proportions for the public health domain.}
    \label{fig:enter-label}
\end{figure}

 \begin{figure}[htbp]
  \centering
  \begin{subfigure}[t]{0.455\textwidth} 
    \centering
\includegraphics[width=\textwidth]{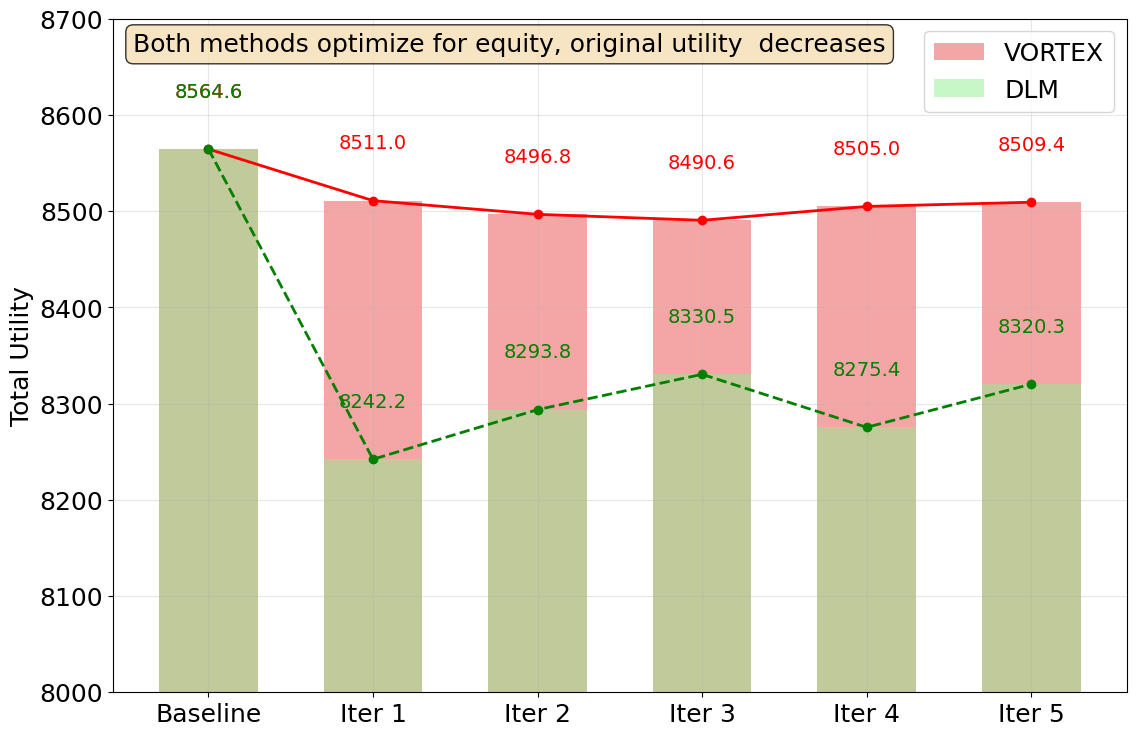}
    \caption{Total utility (favor Old)}
    \label{fig:figure_utility_compare}
  \end{subfigure}%
  \hfill
  \begin{subfigure}[t]{0.455\textwidth}
    \centering
 \includegraphics[width=\textwidth]{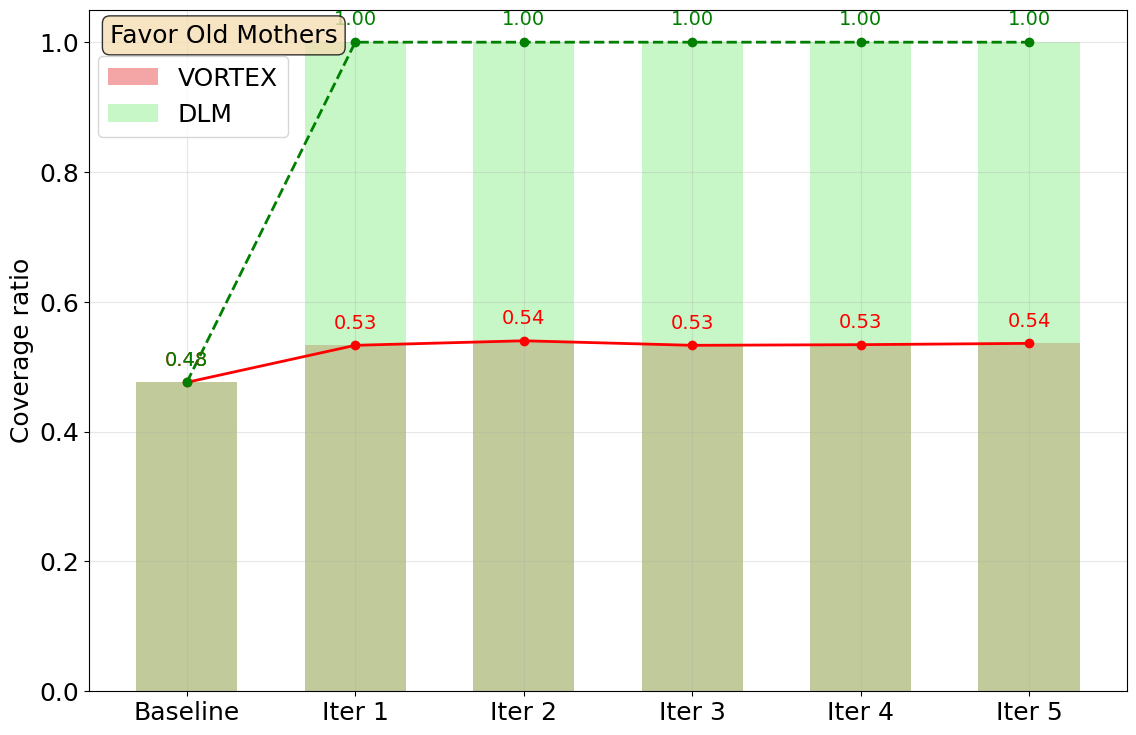}
    \caption{Coverage ratio (favor Old)}
    \label{fig:figure_coverage_compare}
  \end{subfigure}
    \caption{Comparison with DLM for public health domain when preference is favoring old mothers.}
  \label{fig:figure_compare}
  \vspace{-0.6cm}
\end{figure}

 \begin{figure}[htbp]
  \centering
   \begin{subfigure}[t]{0.45\textwidth} 
    \centering
\includegraphics[width=\textwidth]{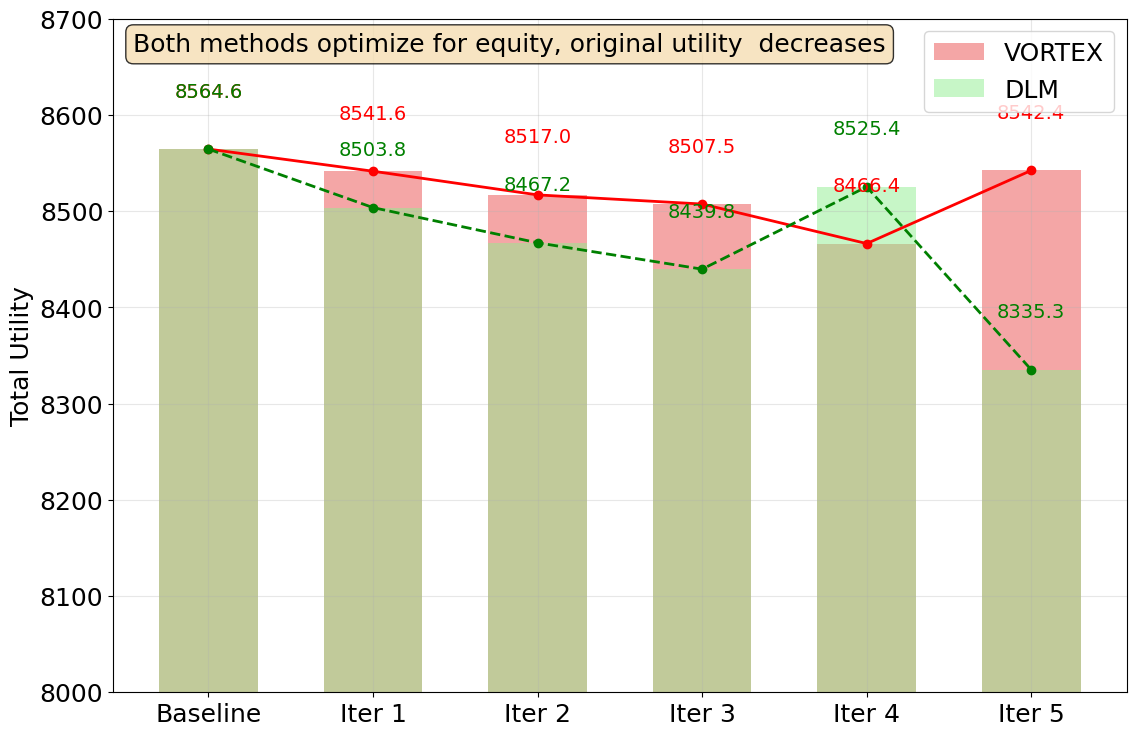}
    \caption{Total utility (favor Young)}
    \label{fig:figure_utility_compare2}
  \end{subfigure}%
  \hfill
  \begin{subfigure}[t]{0.45\textwidth}
    \centering
 \includegraphics[width=\textwidth]{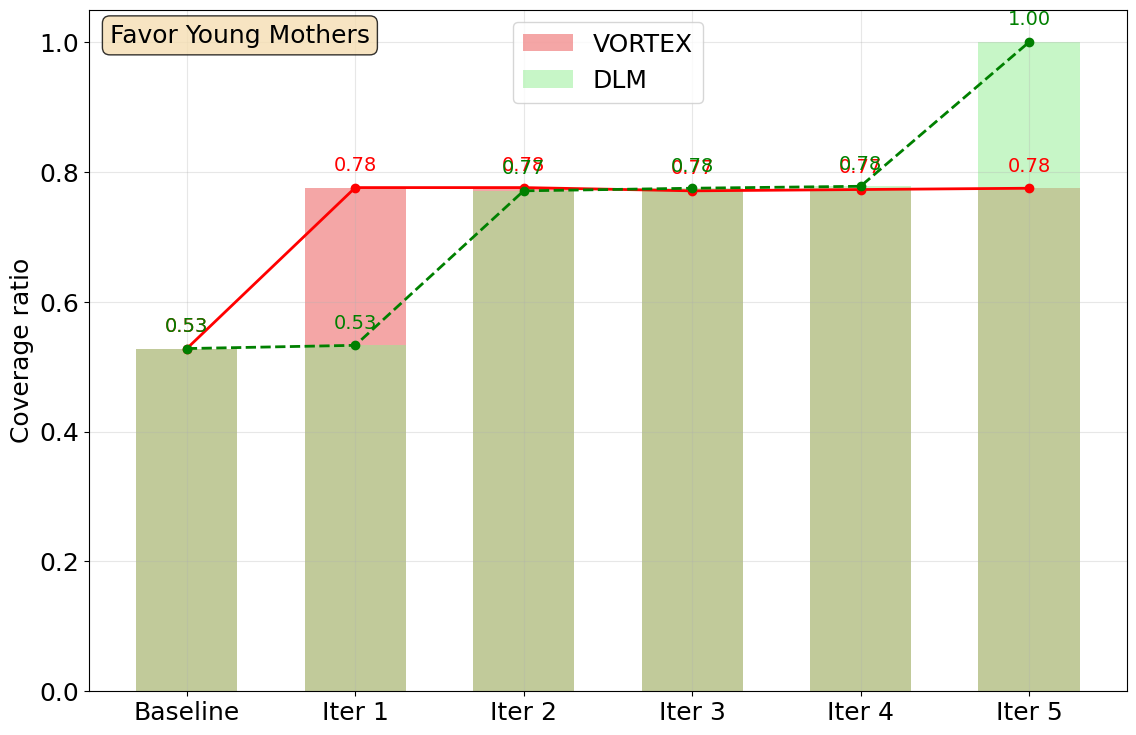}
    \caption{Coverage ratio (favor Young)}
    \label{fig:figure_coverage_compare2}
  \end{subfigure}
 
  \caption{Comparison with DLM for public health domain when preference is favoring young mothers.}
  \label{fig:figure_compare}

\end{figure}

\subsection{Results for Conservation Domain}

An environmental agency is managing anti-poaching patrols in a national park. The goal is to allocate a limited number of ranger teams to different areas to keep the areas secure and protect wildlife \cite{qian2016restless}.

\paragraph{Environment abstract.}
We simulate a park of 400 patrol areas, partitioned into 4 distinct types (each with 100 areas) based on two binary features: Animal Density: Low / High; Access Difficulty: Low / High. The state of each area is binary: $s_i(t)\in\{0,1\}$, where $s_i(t)=0$ indicates the area is "At Risk" (e.g., signs of poacher activity), and $s_i(t)=1$ indicates it is "Secure". At each round, the agency can deploy ranger teams to patrol up to B=100 areas (i.e., 25\% of the total park). Each area evolves as a controlled MDP associated with a transition kernel and base reward.

\begin{table}[h]

\centering

\caption{System Parameters for Conservation}

\begin{tabular}{@{}ll@{}}

\toprule

Parameter & Value \\

\midrule

Number of areas ($N$) & 400 patrol areas \\

Budget constraint ($B$) & 100 patrols per round \\

Time horizon ($T$) & 52 weeks (one year) \\

State space & $S=\{0,1\}$ \\
& $s=0$: At Risk \\
& $s=1$: Secure \\
\bottomrule

\end{tabular}

\end{table}

\textbf{Area Features} $z_i=(\text{density}_i , \text{difficulty}_i)$:
\begin{itemize}
\item Animal Density: {1,2} (Low, High)
\item Access Difficulty: {1,2} (Low, High)
\end{itemize}

\paragraph{Base reward function.} 
$R_b(s=0) =0.1, R_b(s=1) =0.9.$

\subsubsection{Transitions} We list the 4 types of transition below.

Type 1: High Density, High Difficulty

From State 0 (At Risk) with No Patrol:
→ Stay in State 0: 0.850
→ Move to State 1: 0.150

From State 0 (At Risk) with Patrol:
→ Stay in State 0: 0.300
→ Move to State 1: 0.700

From State 1 (Secure) with No Patrol:
→ Stay in State 1: 0.650
→ Move to State 0: 0.350

From State 1 (Secure) with Patrol:
→ Stay in State 1: 0.950
→ Move to State 0: 0.050

Type 2: High Density, Low Difficulty

From State 0 (At Risk) with No Patrol:
→ Stay in State 0: 0.850
→ Move to State 1: 0.150

From State 0 (At Risk) with Patrol:
→ Stay in State 0: 0.150
→ Move to State 1: 0.850

From State 1 (Secure) with No Patrol:
→ Stay in State 1: 0.550
→ Move to State 0: 0.450

From State 1 (Secure) with Patrol:
→ Stay in State 1: 0.950
→ Move to State 0: 0.050

Type 3: Low Density, High Difficulty

From State 0 (At Risk) with No Patrol:
→ Stay in State 0: 0.900
→ Move to State 1: 0.100

From State 0 (At Risk) with Patrol:
→ Stay in State 0: 0.400
→ Move to State 1: 0.600

From State 1 (Secure) with No Patrol:
→ Stay in State 1: 0.800
→ Move to State 0: 0.200

From State 1 (Secure) with Patrol:
→ Stay in State 1: 0.950
→ Move to State 0: 0.050

Type 4: Low Density, Low Difficulty

From State 0 (At Risk) with No Patrol:
→ Stay in State 0: 0.900
→ Move to State 1: 0.100

From State 0 (At Risk) with Patrol:
→ Stay in State 0: 0.250
→ Move to State 1: 0.750

From State 1 (Secure) with No Patrol:
→ Stay in State 1: 0.700
→ Move to State 0: 0.300

From State 1 (Secure) with Patrol:
→ Stay in State 1: 0.950
→ Move to State 0: 0.050

\begin{figure}
    \centering
    \includegraphics[width=0.8\linewidth]{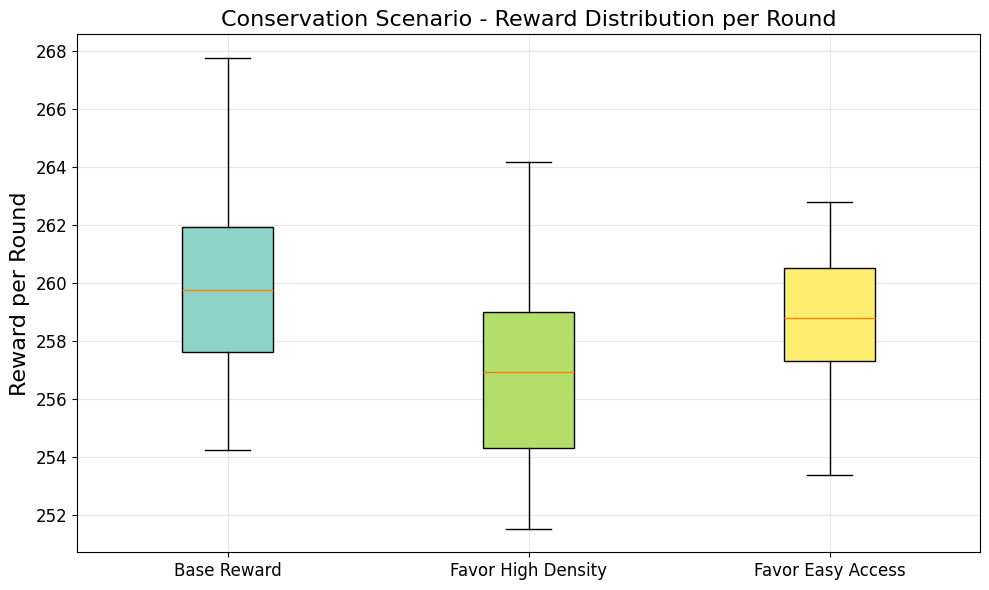}
    \caption{Reward distribution for conservation domain.}
    \label{fig:enter-label}
\end{figure}

\begin{figure}
    \centering
    \includegraphics[width=0.88\linewidth]{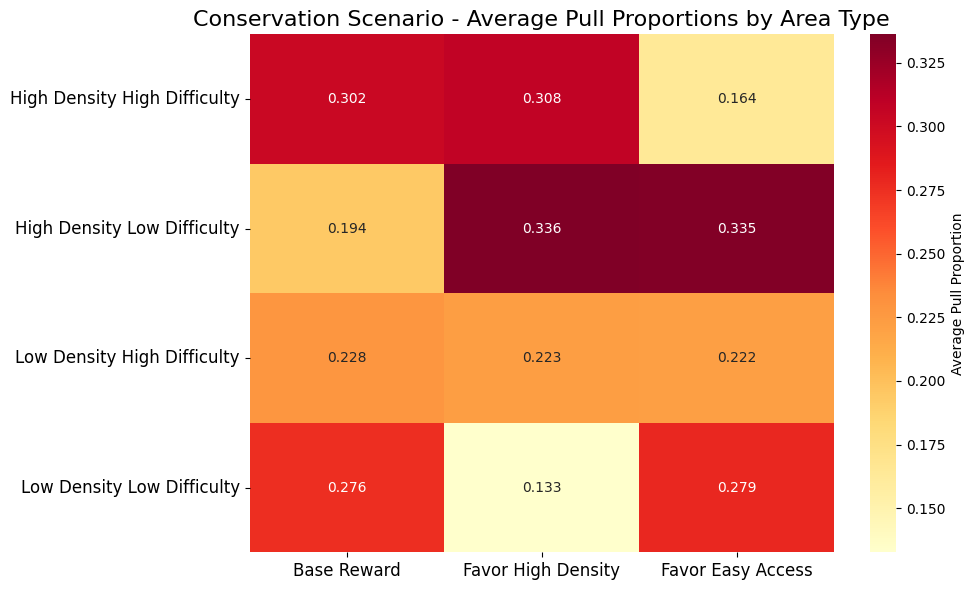}
    \caption{Heatmap for arm-pull proportions under each preference requirement for conservation domain.}
    \label{fig:enter-label}
\end{figure}

\begin{figure}
    \centering
    \includegraphics[width=0.9\linewidth]{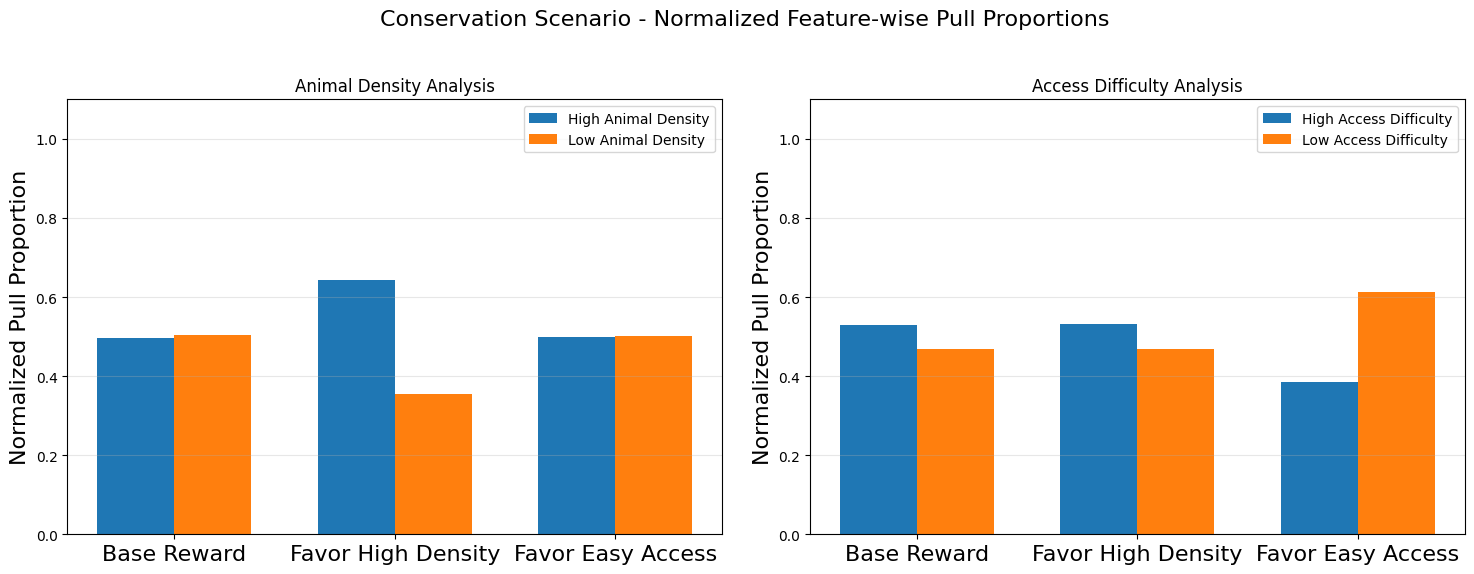}
    \caption{Feature-wise pull proportions under each preference requirement for conservation domain.}
    \label{fig:enter-label}
\end{figure}

\begin{figure}
    \centering
    \includegraphics[width=0.8\linewidth]{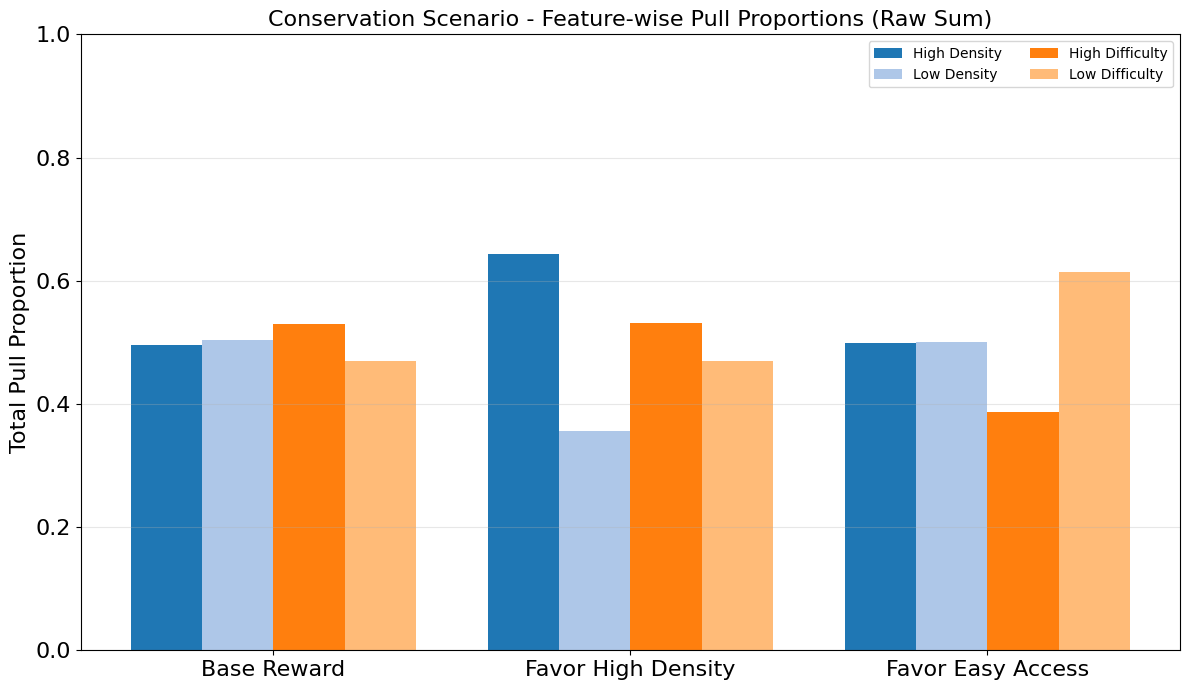}
    \caption{Coverage ratio comparison for conservation domain.}
    \label{fig:enter-label}
\end{figure}

\begin{figure}[htbp]
  \centering
   \begin{subfigure}[t]{0.45\textwidth} 
    \centering
\includegraphics[width=\textwidth]{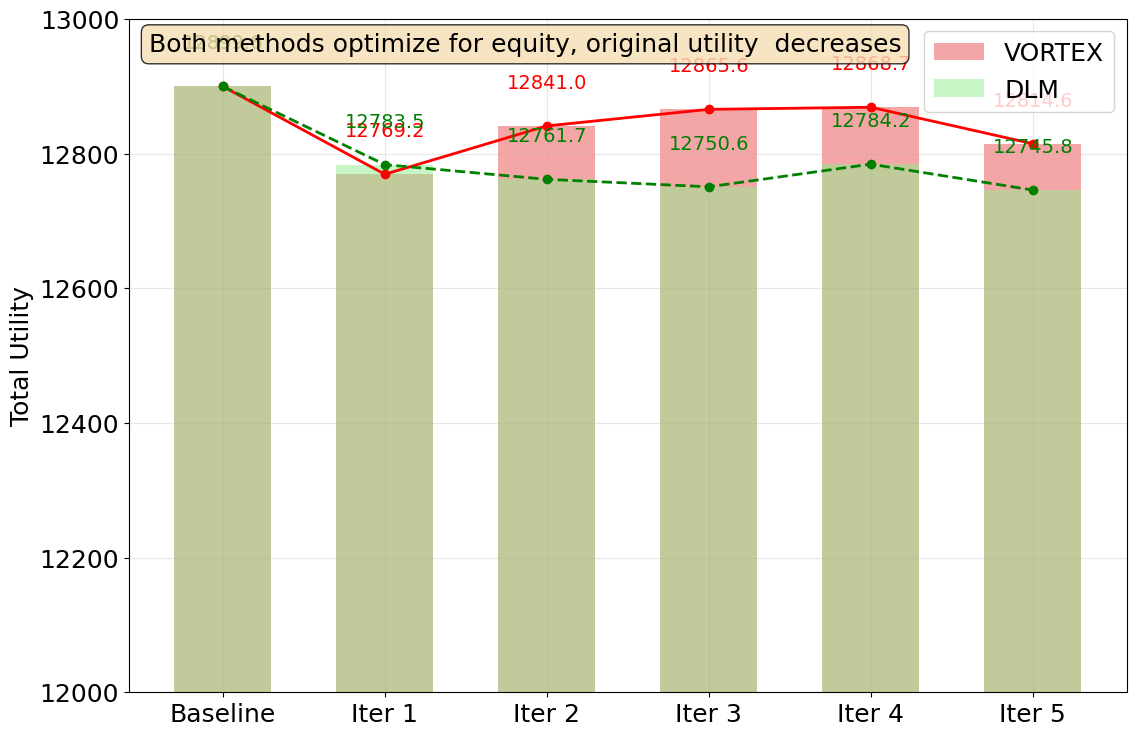}
    \caption{Total utility (favor High Animal Density)}
    \label{fig:figure_utility_compare2}
  \end{subfigure}%
  \hfill
  \begin{subfigure}[t]{0.45\textwidth}
    \centering
 \includegraphics[width=\textwidth]{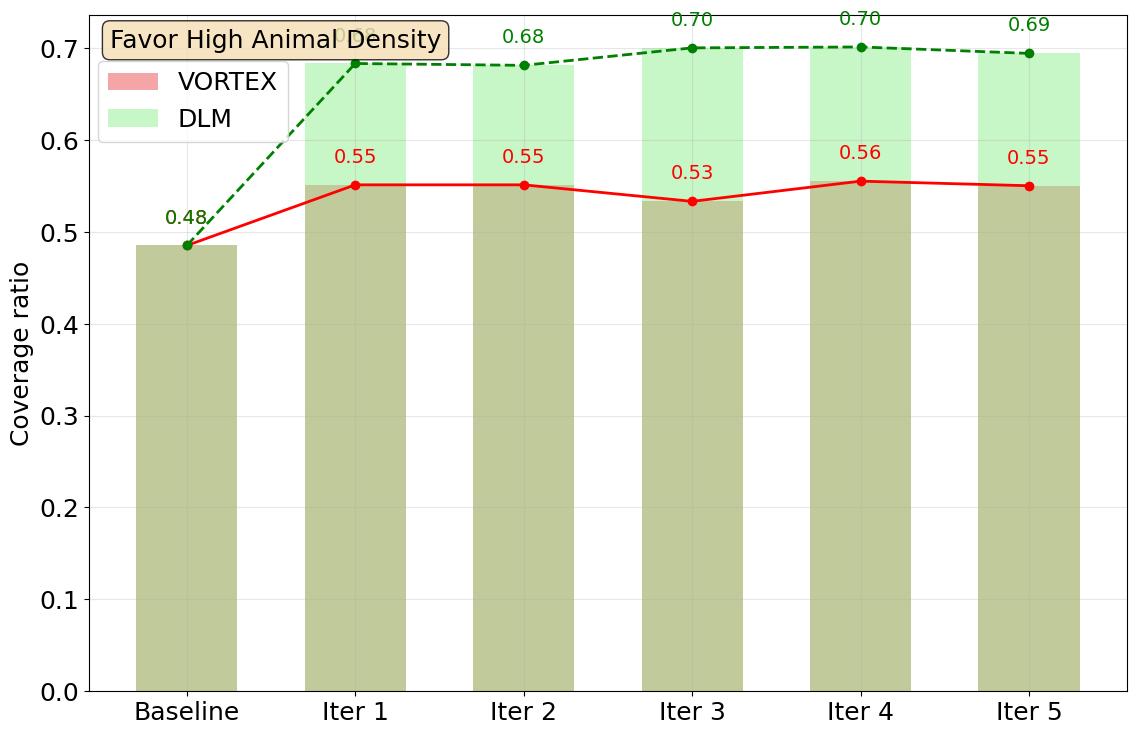}
    \caption{Coverage ratio (favor High Animal Density)}
    \label{fig:figure_coverage_compare2}
  \end{subfigure}
  \vspace{-0.3cm}
  \caption{Comparison with DLM for conservation when preference is favoring High Animal Density.}
  \label{fig:figure_compare}
  \vspace{-0.6cm}
\end{figure}

\begin{figure}[htbp]
  \centering
   \begin{subfigure}[t]{0.45\textwidth} 
    \centering
\includegraphics[width=\textwidth]{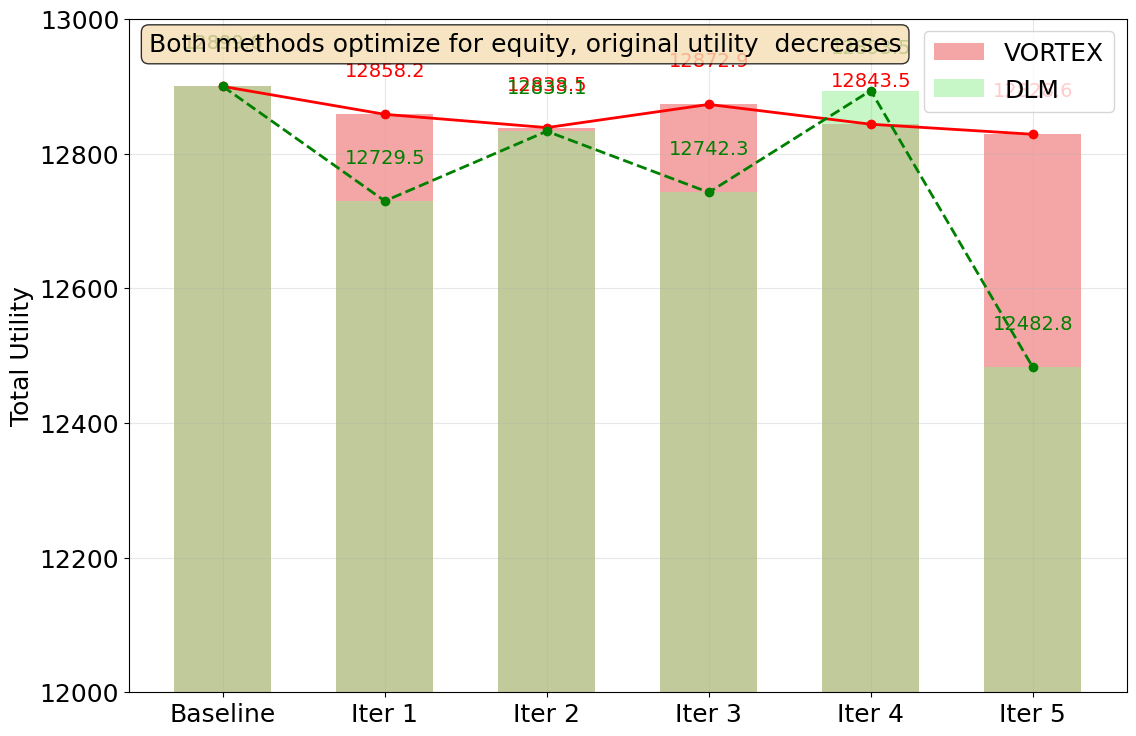}
    \caption{Total utility (favor Low Access Difficulty)}
    \label{fig:figure_utility_compare2}
  \end{subfigure}%
  \hfill
  \begin{subfigure}[t]{0.45\textwidth}
    \centering
 \includegraphics[width=\textwidth]{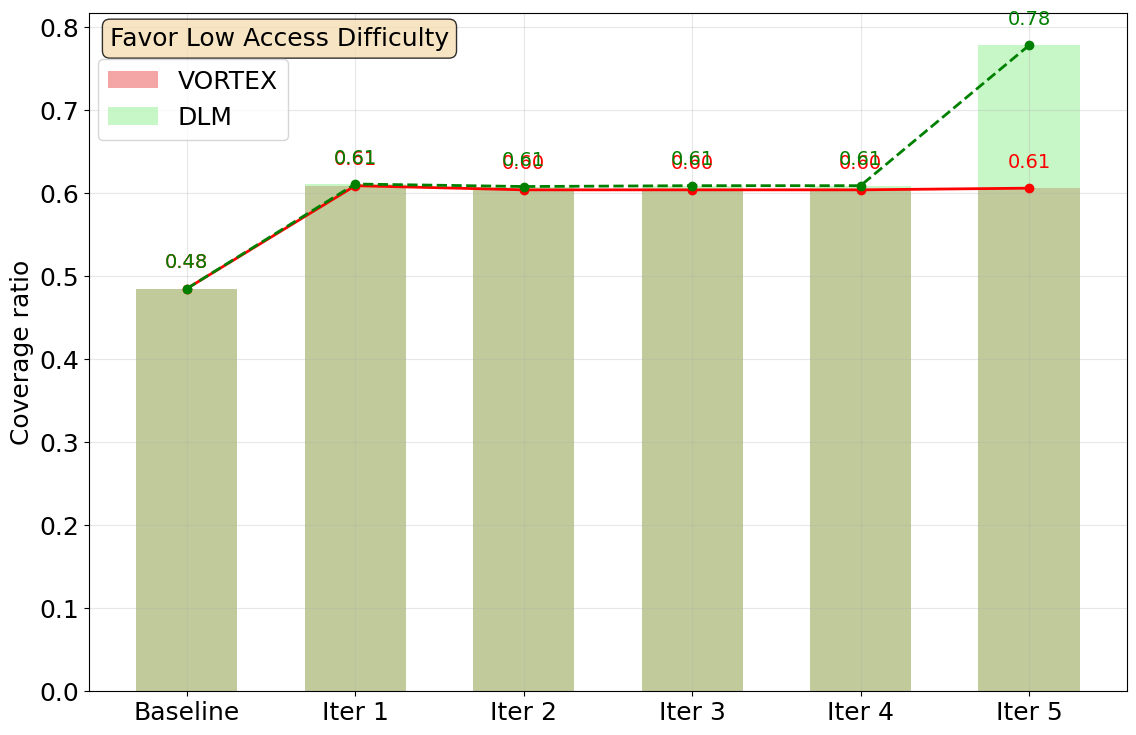}
    \caption{Coverage ratio (favor Low Access Difficulty)}
    \label{fig:figure_coverage_compare2}
  \end{subfigure}
  \vspace{-0.3cm}
  \caption{Comparison with DLM for conservation  when preference is favoring Low Access Difficulty.}
  \label{fig:figure_compare}
  \vspace{-0.6cm}
\end{figure}

%


\end{document}